%% file: preprint.tex
\documentclass[10pt]{article}

\usepackage[T1]{fontenc}
\usepackage{palatino}
\usepackage[margin=1in]{geometry}
\usepackage{natbib}
\usepackage{hyperref}
\usepackage{xcolor}
\usepackage{tcolorbox}
\definecolor{titleboxcolor}{HTML}{DFE6E6}
\usepackage{amsmath}
\usepackage{amssymb}
\usepackage{mathtools}
\usepackage{mathrsfs}
\usepackage{graphicx}
\usepackage{subcaption}
\usepackage[space]{grffile}
\usepackage{url}
\newif\ifpreprint
\input{additional_preamble.tex}

\input{macros.tex}
\preprinttrue

\newcommand{\answerTODO}[1][]{\textcolor{red}{\bfseries [TODO]}}

\usepackage{etoolbox}

\renewenvironment{wrapfigure}[3][]{%
  \begin{figure}[htbp]\centering\minipage{#3}\centering
}{%
  \endminipage\end{figure}
}

\graphicspath{{./}}

\title{Auction-Based Online Policy Adaptation\\for Evolving Objectives}

\author{
Guruprerana Shabadi\textsuperscript{1} \and Kaushik Mallik\textsuperscript{2} \\[6pt]
\textsuperscript{1}University of Pennsylvania, United States \\
\textsuperscript{2}IMDEA Software Institute, Spain \\[4pt]
\texttt{shabadi@seas.upenn.edu, kaushik.mallik@imdea.org}
}

\date{}

\renewenvironment{abstract}{\par\vskip 0.5em\noindent\ignorespaces}{\par}

\begin{document}

\begin{tcolorbox}[colback=titleboxcolor, coltext=black, colframe=titleboxcolor, boxrule=0pt, arc=0pt, left=12pt, right=12pt, top=12pt, bottom=12pt]
{\LARGE\bfseries Auction-Based Online Policy Adaptation\\for Evolving Objectives\par}
\vskip 1em
{\normalsize Guruprerana Shabadi\textsuperscript{1}, Kaushik Mallik\textsuperscript{2}\par}
\vskip 0.5em
{\small \textsuperscript{1}University of Pennsylvania, United States,
\textsuperscript{2}IMDEA Software Institute, Spain}
\vskip 1em
\input{./abstract}
\end{tcolorbox}

\input{./intro}
\input{./related-work}
\input{./problem-setup}

\input{./learning}
\input{./implementation-eval}

\input{./conclusion}

\subsubsection*{Acknowledgments}
Kaushik Mallik is supported by the grant RYC2024-049116-I funded by MICIU/AEI/10.13039/501100011033 and the ESF+.

\appendix

\FloatBarrier
\bibliographystyle{plainnat}
\bibliography{./main}

\input{./deterministic-appendix}

\input{./poorman-proof-appendix}
\input{./allpay-proof-appendix}
\input{./envs-appendix}
\input{./additional-experimental-results}
\input{./implementation-details-appendix}
\input{./hyperparameters}

\end{document}

%% file: additional_preamble.tex
\usepackage{amsthm,amsmath,amssymb}
\usepackage{cleveref}
\usepackage{xspace}
\usepackage{wrapfig}            
\usepackage{placeins}           
\usepackage{booktabs}           
\usepackage{graphicx}
\usepackage{adjustbox}
\usepackage{mathtools}
\usepackage{subcaption}
\usepackage{xfrac}
\usepackage{tikz}
\usetikzlibrary{arrows.meta,calc,positioning}

\theoremstyle{plain}
\newtheorem{theorem}{Theorem}
\newtheorem{lemma}{Lemma}

\newtheorem{proposition}{Proposition}

\theoremstyle{definition}
\newtheorem{definition}{Definition}

\theoremstyle{remark}

%% file: macros.tex
\newcommand{\dist}{\Delta}
\newcommand{\N}{\mathbb{N}}

\newcommand{\R}{\mathbb{R}}

\newcommand{\abs}[1]{\left|#1\right|}

\newcommand{\momdp}{MO-MDP\xspace}
\newcommand{\momdps}{MO-MDPs\xspace}

\renewcommand{\vec}[1]{\mathbf{#1}}

\newcommand{\M}{\mathcal{M}}

\newcommand{\poorman}{\textsf{Winner-Pays}\xspace}
\newcommand{\allpay}{\textsf{All-Pay}\xspace}

\newcommand{\E}{\mathbb{E}}

\newcommand{\slack}{\mathit{slack}}

\newcommand{\win}{\mathsf{win}}

\newcommand{\lose}{\mathsf{lose}}
\newcommand{\cont}{\mathsf{cont}}



%% file: abstract.tex
\begin{abstract}
We consider multi-objective reinforcement learning problems where objectives come from an identical family---such as the class of reachability objectives---and may appear or disappear at runtime.
Our goal is to design adaptive policies that can efficiently adjust their behaviors as the set of active objectives changes.
To solve this problem, we propose a modular framework where each objective is supported by a selfish local policy, and coordination is achieved through a novel \emph{auction}-based mechanism:
policies bid for the right to execute their actions, with bids reflecting the urgency of the current state.
The highest bidder selects the action, enabling a dynamic and interpretable trade-off among objectives.
Going back to the original adaptation problem, when objectives change, the system adapts by simply adding or removing the corresponding policies.
Moreover, as objectives arise from the same family, identical copies of a parameterized policy can be deployed, facilitating immediate adaptation at runtime.
We show how the selfish local policies can be computed by turning the problem into a general-sum Markov game, where the policies compete against each other to fulfill their own objectives.
To succeed, each policy must not only optimize its own objective, but also reason about the presence of other goals and learn to produce calibrated bids that reflect relative priority.
Under mild assumptions, we prove the existence of Nash equilibria where dishonest bidding leads to suboptimal outcome, and the most urgent objectives win control automatically.
In our implementation, the policies are trained concurrently using proximal policy optimization (PPO).
We evaluate on two Atari games and a gridworld-based path-planning task with dynamic targets.
Our method achieves substantially better performance than monolithic policies trained with PPO. 
\end{abstract}

%% file: intro.tex
\section{Introduction}
\label{sec:introduction}

\newcommand{\introfigleftvoffset}{-3cm}

\begin{wrapfigure}[12]{r}{0.3\textwidth}
  \hspace*{0.6em}\begin{minipage}{\dimexpr\linewidth-1em}
    \includegraphics[width=\linewidth]{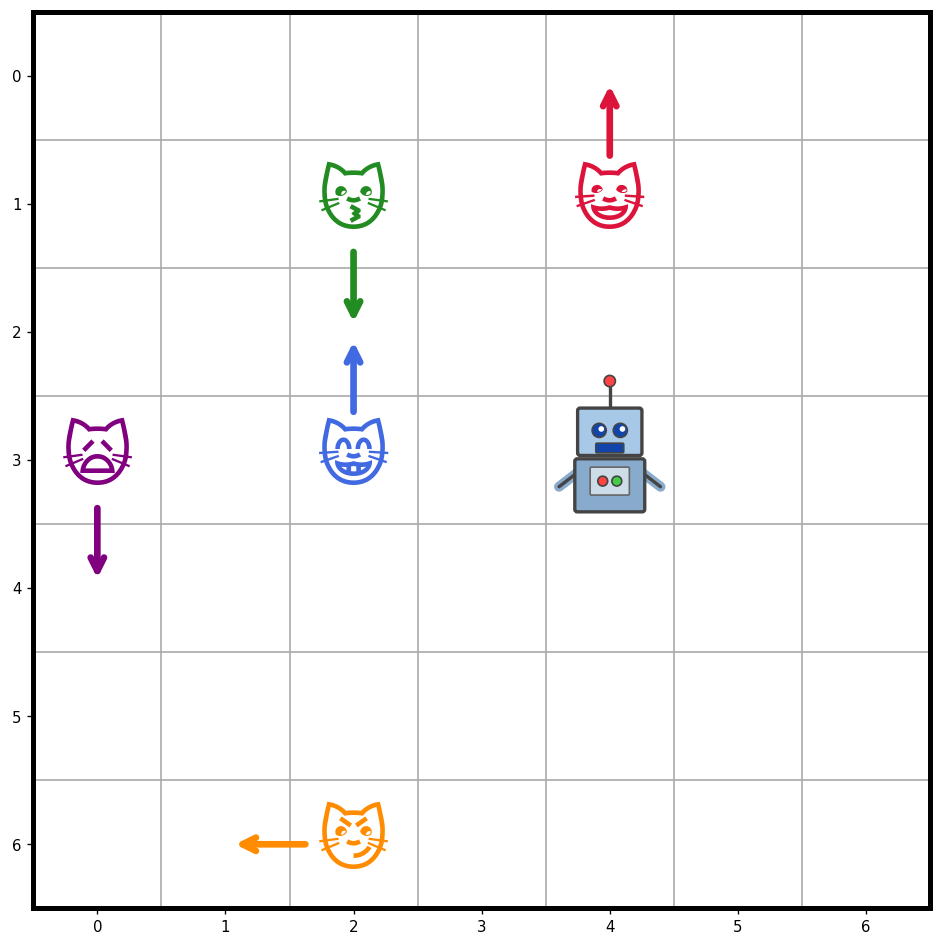}
    \caption{Cat Feeder env.}
    \label{fig:grid_preview_intro}
  \end{minipage}
\end{wrapfigure}
Consider the problem of controlling a mobile robot deployed in a cat shelter whose task is to deliver food to cats roaming around the facility (Figure~\ref{fig:grid_preview_intro}). 
New food requests from hungry cats may arrive at arbitrary times and locations, while existing requests may disappear if a cat wanders away before being served.
Since the robot can serve only one cat at a time, fulfilling one request may delay others, potentially allowing some cats to move farther away. 
Thus, the robot must dynamically adapt to the set of active requests to maximize the number of cats fed before they disappear.

In this work, we model the above control problem as a multi-objective reinforcement learning task in which all the objectives come from an identical family, such as reachability in this case.
These objectives can be conflicting and any number of them may be active at a given time.
Moreover, we make no assumptions about their distribution or arrival pattern.
Our goal is to design adaptive policies that update their behavior seamlessly as objectives appear or disappear.

The class of problems we study is relevant to many real-world scenarios.
For example, a warehouse robot must be responsive to changing pickup, delivery, and sorting tasks, and this setting has motivated work on warehouse robot path planning and resource allocation~\citep{chen2023warehouserobot,shen2023amazonrobotics}.
More broadly, such problems have been extensively studied in the planning literature~\citep{fox2006plan}, typically assuming that a precise environment model (like a graph) is available without uncertainties. 
Existing reinforcement learning-based path planning methods~\citep{singh2023reviewrlpathplanning} are able to handle uncertainties but cannot adapt to varying number of objectives at deployment time.

To solve this problem, we propose a novel compositional RL framework, where each objective is served using a local policy, selfishly maximizing its assigned objective by competing against the other policies.
Coordination between these policies is achieved through a novel auction-based mechanism:
at regular intervals of a given length $\tau>0$, policies submit bids for the right to execute their actions, where bids reflect the urgency of the current state with respect to their objectives. 
The highest bidder selects the action for the next $\tau$ steps, enabling a dynamic and interpretable trade-off among objectives.
Returning to the cat feeder example, suppose the robot approaches an intersection with two active requests, the policy for the first request recommends turning left, while the policy for the second recommends turning right. 
Since both actions cannot be executed simultaneously, a trade-off is unavoidable. 
The auction mechanism determines which objective is prioritized, based on the relative urgency encoded in the bids.

This compositional design yields modularity, facilitating fast adaptation:
when an objective appears or disappears, only the corresponding policy needs to be added or removed.
Furthermore, when the objectives belong to the identical family---which is reachability in the cat feeder example---a single generalized policy can be trained for the entire family. 
We can deploy identical \emph{copies} of this policy when new objectives arrive, enabling immediate adaptation. 
Modularity also permits us to define objective-specific reward signals while the auction mechanism takes care of the trade-offs between objectives.
For example, we can easily define a distance-based dense reward signal for each policy.
In contrast, in the monolithic case, it is challenging to define such a dense reward signal that strikes the ``optimal'' trade-off between all objectives.
We show in our experiments that monolithic policies, trained with both sparse rewards and different types of scalarized dense rewards, achieves significantly lower payoffs as compared to our modular approach.

Yet another advantage of our modular framework is interpretability.
At any moment, we can observe the policy in control, and thereby the objective currently being pursued by the agent. 
This explanation is beneficial in a range of real-world situations, like providing post-hoc explanations of failures, and also feedback on biases in pursuing different objectives.

We implement our modular framework as a general-sum Markov game between local policies seeking to fulfill their assigned objectives, where each policy's action space is augmented with an additional numeric variable representing the bids.
However, new challenges arise, and below we summarize how we overcome them.

\textbf{Challenge I: Enforcing honest bids.}
The success of the auction mechanism hinges on the policies faithfully selecting their bid values according to their true urgency. But what if policies learn to overbid to suppress competitors?
This would lead to suboptimal results. 
To prevent this, policies are penalized in proportion to their bid values.
As a consequence, excessive bidding reduces the net return of a policy, which is the total accumulated rewards minus the total accumulated bidding penalties. 
Under mild assumptions, we prove that our penalty model implements truthful bidding, i.e., policies would bid high only if the earned rewards outweigh the penalties.

\textbf{Challenge II: Achieving environment awareness.}
Effective bidding not only depends on the current state and the local objective, but also on the  objectives of the competing policies. 
For instance, in the cat feeder problem, if two requests are spatially close, their policies need not compete aggressively. 
Conversely, if requests lie in opposite directions, the urgent policy must bid sufficiently high to win. 
To enable such environment-aware bidding, we allow the policies to observe all the objectives, i.e., the policies have information about the global state. 
But a large number of objectives makes the global state vector very big, resulting in sluggish training. 
We equip policies with an \textit{attention pooling} architecture that can extract relevant information from the active objectives into a succinct embedding vector, so that it can be fed as input to the policy.
Each local policy is equipped with its own pooling module which is co-trained with the policy.


\textbf{Challenge III: Variable objective count.}
Recall, we set out to support environments where objectives appear and disappear freely at runtime.
This feature is automatically incorporated by the attention pooling module, which is a learned component with the ability to generalize against different number of competing objectives than the number seen at training time.
Regardless of the objective count, the learned policy always receives a fixed-size encoding of the global state of the environment.


We test our framework on three environments with dynamic expected total payoff objectives: the cat feeder environment implemented as a gridworld, and two Atari games: Assault and Air-raid~\citep{bellemare13arcade}.
We use proximal policy optimization (PPO; \cite{schulman2017ppo}) to train each individual policy competing against other policies in the Markov game encoding of the problem.
We report the empirical convergence of the training process, and show that policies trained using our framework exhibit dramatic improvement over the baselines in terms of performance.
We also demonstrate the ability of our approach to generalize to more objectives at test-time than during training.
Lastly, to aid interpretability in renderings of rollouts, we implement visualization of the policy in control and the objective being pursued, through which we get real-time feedback on the strategic behaviors of the policies (videos are included in the Supplementary Material).


%% file: related-work.tex
\textbf{Related Work.} Most multi-objective reinforcement learning (MORL) methods reduce multiple rewards to a single decision rule, either through scalarization~\citep{gass1955computational,van2013scalarized}, Pareto-based criteria~\citep{van2014multi,pirotta2015multi}, or fairness objectives~\citep{park2024max,byeon2025multi,siddique2020learning}. These approaches typically learn a monolithic policy for a fixed set of reward components. In contrast, our method keeps objectives modular: each local policy pursues one objective, and the auction mechanism composes their actions at runtime.

The closest distributed MORL approaches are W-learning and Deep W-learning~\citep{humphrys1995w,rosero2024multi}, which train separate policies and select among them using learned urgency-like scores. Other decomposition methods combine action values or votes through fixed aggregation rules~\citep{russell2003q,mendez2019multi}. Our approach differs by using bids as the coordination interface, which lets standard policy-gradient training learn both the local action and the urgency signal without a separate meta-policy.

Our design is also related to hierarchical RL, which uses modular subpolicies under a higher-level manager~\citep{klissarov2025discovering}.
In comparison, our composition is flat and competitive rather than manager-driven.
Another related line of work is using auction mechanisms for model-based multi-objective planning and bidding games~\citep{avni2024auction,lazarus1999combinatorial,avni2019infinite,avni2025bidding}.
In comparison, our setting uses model-free RL and supports a varying number of objectives.

%% file: problem-setup.tex
\section{Auction-Based Multi-Policy Framework}\label{sec:auction-framework}

\paragraph{\momdps, Policies, and Values.}
We use multi-objective Markov decision processes to model uncertain environments with multiple reward objectives.
An \momdp with \(m\in \N_{>0}\) objectives is a tuple \(\mathcal{M}=(S,A,T,\mu,R,H)\), where \(S\) is the \textit{state} space, \(A\) is the \textit{action} space, \(T\colon S\times A\to \dist(S)\) is the \textit{transition} kernel, \(\mu\in \dist(S)\) is the \textit{initial state} distribution, \(R=\{r_i\colon S\times A\to \R\}_{i\in [1;m]}\) is the set of \textit{reward functions}, and $H\in \mathbb{N}_{>0}$ is the \textit{time horizon}.
A (Markov) \textit{policy} \(\pi:S\to \dist(A)\) maps the current state to a distribution over actions. 
For a trace \(\rho=s^0a^0s^1a^1\ldots s^H\), the accumulated \textit{local} reward of objective \(i\) is \(w_i(\rho)=\sum_{t=0}^{H-1} r_i(s^t,a^t)\).
In the environments we consider, the optimization metric is the cumulative reward across objectives and so we define the \textit{global} reward (aka, scalarized reward~\cite{van2013scalarized}) as \(w(\rho)=\sum_{i=1}^m w_i(\rho)\).
The \textit{value} of a policy is \(V^\pi=\E^\pi[w(\rho)]\), and an \textit{optimal} policy is any policy \(\pi^*\in \arg\max_\pi V^\pi\).



\smallskip


We present a novel \textit{auction}-based RL framework for compositional policy synthesis for \momdps.
In our framework, we have objective-specific local policies that not only emit actions, but also submit \emph{bids} within a given bound $\beta\in \mathbb{N}_{>0}$ to win the privilege of executing their actions for a given number of time steps $\tau\in \mathbb{N}_{>0}$.
We consider two auction mechanisms, called \poorman and \allpay.
In~\poorman, the bids are non-negative \textit{integers} in the interval $[0;\beta]$, and in~\allpay, the bids are non-negative \textit{real} numbers in the interval $[0,\beta]$.%
\footnote{The distinction in bid types---\textit{integers} for \poorman and \textit{reals} for \allpay---is purely technical. 
The synthesis of local policies will involve a game-theoretic analysis involving Nash equilibria.
For real-valued bids in \poorman, it is unclear if Nash equilibria would always exist, owing to the continuous bid values and discontinuities in the payoff profile due to the tie-breaking mechanism~\cite{simon1990discontinuous}. 
For integer-valued bids in \allpay, the shape of the Nash equilibria policies is only known for the two-policy case~\cite{dziubinski2023discrete}, but the general case is still open.
We leave these unresolved cases for a future work.
}
In both settings, the highest bidder's actions get executed for $\tau$ consecutive steps, with bidding ties being resolved uniformly at random.
To encourage truthfulness, we penalize the policies proportional to their bids scaled with a constant $\rho\in (0,1)$.
In the \poorman model, only the executed policy (one of the highest bidders) pays the penalty, whereas in \allpay, all policies pay penalties.
The penalties force policies to truthfully select their bids to reflect how critical their action actually is in the current state.
We remark that this framework is a purely decentralized scheme to coordinate local policies in a given \momdp.
The parameters $\tau$, $\beta$, and $\rho$ denote the \emph{bidding interval}, \emph{maximum bid}, and \emph{bid penalty coefficient}, respectively. Ablation studies in Section~\ref{sec:experimental-results} highlight their importance. If $\tau$ is too small, policies switch too frequently, risking suboptimal outcomes. If $\beta$ is too small, policies cannot express preference strength, leading to near-random control. If $\rho$ is too large, policies tend to abstain from participating in the bidding.

We formalize our framework as a general-sum game between multiple policies.

\begin{definition}[Markov bidding game]\label{def:markov-bidding-game}
	Let \(\mathcal{M}=(S,A,T,\mu,R,H)\) be an \momdp with \(m\) objectives, let \(\tau\in\N_{>0}\) be the bidding interval, let \(\beta\in\R_{>0}\) be the maximum bid, let \(\rho\in(0,1)\) be the bid penalty coefficient, and let \(\mathsf{P}\in\{\poorman,\allpay\}\) be the penalty model.
	The induced Markov bidding game between the $m$ local policies is \(\mathcal{G}^{\mathcal{M}}_{\tau,\beta,\rho,\mathsf{P}}=(\hat S,\hat A,\hat T,\hat R,\hat\mu,H)\), where:
	\begin{itemize}
		\item \(\hat S=S\times [1;m]\times [0;\tau-1]\). A state \((s,k,q)\) records the \momdp state \(s\), the current controlling policy \(k\), and the number \(q\) of steps remaining before the next bidding round. When \(q=0\), a new bidding round occurs and the current controller \(k\) is ignored.
		\item \(\hat A=(A\times [0;\beta])^m\) for $\mathsf{P}=\poorman$ and $\hat A=(A\times [0,\beta]^m)$ for $\mathsf{P}=\allpay$. Each policy \(i\) independently chooses an action \(a_i\in A\) and a bid $b_i$, where \(b_i\in[0;\beta]\) or $b_i\in [0,\beta]$ depending on $\mathsf{P}$. We write a joint action as \((\vec a,\vec b)=(a_i,b_i)_{i\in[1;m]}\).
		\item Let \(W(\vec b)=\{i\in[1;m]\mid b_i=\max_j b_j\}\) be the set of highest bidders. The transition kernel \(\hat T:\hat S\times\hat A\to\dist(\hat S)\) is defined by
		\begin{equation}
			\hat T((s,k,q),(\vec a,\vec b),(s',k',q'))=
			\begin{cases}
				T(s,a_k,s') & q>0 \land k'=k \land q'=q-1,\\
				\frac{1}{\abs{W(\vec b)}}T(s,a_{k'},s') & q=0 \land k'\in W(\vec b)\land q'=\tau-1,\\
				0 & \text{otherwise}.
			\end{cases}
		\end{equation}
		Intuitively, the case ``$q>0$'' means it is not yet a bidding round, and the currently active policy $k$ continues to be used, and the case ``$q=0$'' means it is a bidding round, and a new policy is selected via uniform tie-breaking over the set $W$.
		\item For each policy \(i\), the reward is inherited from the \momdp reward \(r_i\), with a bid penalty charged at bidding rounds:
		\begin{equation}
			\hat r_i((s,k,q),(\vec a,\vec b),(s',k',q'))
			=
			r_i(s,a_{k'})
			-
			\rho b_i\cdot \mathbf{1}\!\left[q=0\land (i=k'\lor \mathsf{P}=\allpay)\right].
		\end{equation}
		\item The initial distribution \(\hat\mu\) satisfies \(\hat\mu(s,k,0)=\mu(s)/m\) for every \(k\in[1;m]\), and \(\hat\mu(s,k,q)=0\) for \(q>0\).
	\end{itemize}
\end{definition}
Thus a local policy $i$ in the \momdp $\M$ is a strategy $\pi_i\colon \hat S\to \hat A_i$ in the Markov bidding game $\mathcal{G}^{\mathcal{M}}_{\tau,\beta,\rho,\mathsf{P}}$ maximizing its own expected cumulative values of $\mathbb{E}[\sum_{t=0}^{H-1}\hat r_i^t]$, against every other strategy $\pi_j$ that is maximizing its own expected cumulative reward $\mathbb{E}[\sum_{t=0}^{H-1}\hat r_j^t]$, while the auction rule determines which local action is executed.
We write $\pi_i$ as a pair $(\pi_i^a,\pi_i^b)$, where \(\pi_i^a\colon \hat S\to A\) is the action component and \(\pi_i^b\colon \hat S\to [0;\beta]\) is the bid component  for \poorman and $\pi_i^b\colon \hat S\to [0,\beta]$ is the same for \allpay.
We next analyze the bidding incentives at a single decision state, using continuation values from this game.

\input{theory_new}

%% file: theory_new.tex
%

\smallskip
\noindent\textbf{Equilibrium continuation values.}
We consider local policies \(\pi_i=(\pi_i^a,\pi_i^b)\) that form a Nash equilibrium of the Markov bidding game $\mathcal{G}^{\mathcal{M}}_{\tau,\beta,\rho,\mathsf{P}}$.
For simplicity, assume $H$ is a multiple of $\tau$ (otherwise, extend the horizon to the next multiple of $\tau$).
The policies bid at \textit{bidding rounds} $t\in \{0,\tau,2\tau,\ldots,H-\tau\}$.
Given a policy index $i\in [1;m]$, time $t\leq H$, and state $s\in \hat S$ of $\mathcal{G}^{\mathcal{M}}_{\tau,\beta,\rho,\mathsf{P}}$, we write $V^*_{i,t}$ to denote the value policy $i$ achieves in the rest of the game starting at time $t$ and at state $s$, when all the policies follow a Nash equilibrium.
Since the horizon $H$ is finite, we can define $V^*_{i,t}$ using backward induction over $t$ starting from $t=H$.
Set \(V^*_{i,H}(s)=0\) for every \(i\) and \(s\).
Suppose the game is at state \(s\) and bidding round $t\in \{0,\tau,2\tau,\ldots,H-\tau\}$.

The \textit{winning continuation value} of a policy $i$ is the optimal value it can achieve upon winning bidding:
\begin{equation}
	\widetilde V^{\win}_{i,t}(s)
	\coloneqq
	\max_{\pi_i^a}
	\E^{\pi_i^a}\left[
		\sum_{q=0}^{\tau-1} r_i(s^{t+q},a^{t+q})
		+V^*_{i,t+\tau}(s^{t+\tau})
	\right],
\end{equation}
where the action sequence is generated by policy \(i\) during the time window $[t;t+\tau]$.
Similarly, the \textit{losing continuation value} is the least value it achieves upon losing the bidding:
\begin{equation}\label{eq:losing continuation value}
	\widetilde V^{\lose}_{i,t}(s)
	\coloneqq
	\min_{j\neq i}
	\E^{\pi_j^a}\left[
		\sum_{q=0}^{\tau-1} r_i(s^{t+q},a^{t+q})
		+V^*_{i,t+\tau}(s^{t+\tau})
	\right],
\end{equation}
where the action sequence during $[t;t+\tau]$ is generated by a different policy \(j\) using an action policy $\pi_j^a$ that maximizes $j$'s own winning continuation value.
The \textit{control gain} of policy \(i\) at \((s,t)\) is
\begin{equation}
	G_{i,t}(s)
	\coloneqq
	\widetilde V^{\win}_{i,t}(s)-\widetilde V^{\lose}_{i,t}(s).
\end{equation}
This quantity measures the advantage of winning control for the next bidding interval rather than losing it.
Given bids \(b=(b_1,\ldots,b_m)\), let
$W(b)\coloneqq \{k\in [1;m]\mid b_k=\max_{\ell\in[1;m]}b_\ell\}$ 
be the set of highest bidders, and let \(p_i(b)=\mathbf{1}[i\in W(b)]/\abs{W(b)}\) be the probability that \(i\) wins after uniform tie-breaking.
For a penalty model \(\mathsf{P}\in\{\poorman,\allpay\}\), the \textit{one-round bidding payoff} is
\begin{equation}
	U^{\mathsf{P}}_{i,t}(s,b)
	=
	\begin{cases}
		\widetilde V^{\lose}_{i,t}(s)+p_i(b)\left(G_{i,t}(s)-\rho b_i\right) & \mathsf{P}=\poorman,\\
		\widetilde V^{\lose}_{i,t}(s)+p_i(b)G_{i,t}(s)-\rho b_i & \mathsf{P}=\allpay.
	\end{cases}
\end{equation}
In the \poorman case, the bid cost is paid only in expectation over winning; in the \allpay case, the bid cost is paid regardless of the winner.
Effectively, in the Markov game, the goal of policy $i$ boils down to maximizing the one-round bidding payoff $U^{\mathsf{P}}_{i,t}(s,b)$ at each state $s$ and bidding round $t$. 

A mixed bidding policy for player \(i\) at \((s,t)\) is denoted \(\pi^b_{i,t}(\cdot\mid s)\in\dist([0,\beta])\), and a mixed bidding profile is \(\pi^b_t(\cdot\mid s)=(\pi^b_{1,t}(\cdot\mid s),\ldots,\pi^b_{m,t}(\cdot\mid s))\).
A mixed bidding profile \(\pi^{b*}_t(\cdot\mid s)\) is a Nash equilibrium of the one-round bidding game at \((s,t)\) if, for every player \(i\) and every alternative bid distribution $\pi^b_{i,t}$, where \(\pi^b_{i,t}(\cdot\mid s)\in\dist([0;\beta])\) for $\mathsf{P}=\poorman$ and $\mathsf{P}=\allpay$,
\begin{equation}
	\E_{b\sim\pi^{b*}_t(\cdot\mid s)}\!\left[U^{\mathsf{P}}_{i,t}(s,b)\right]
	\geq
	\E_{b_i\sim\pi^b_{i,t}(\cdot\mid s),\; b_{-i}\sim\pi^{b*}_{-i,t}(\cdot\mid s)}\!\left[U^{\mathsf{P}}_{i,t}(s,b_i,b_{-i})\right].
\end{equation}
For the \poorman setting, the existence of the Nash equilibrium is guaranteed due to the finiteness of the action space $\hat A$~\citep{nash1951noncooperative}, and for the \allpay setting, the same follows from known results in the literature~\citep{baye1996all}.
The equilibrium continuation value is then
\begin{equation}
	V^*_{i,t}(s)
	=
	\E_{b\sim \pi^{b*}_t(\cdot\mid s)}\left[U^{\mathsf{P}}_{i,t}(s,b)\right],
\end{equation}
which completes the backward-induction definition.

\smallskip
\noindent\textbf{Theoretical analysis of the bidding schemes.}
We prove under mild assumptions that if all policies follow the Nash equilibrium then the policy with the highest control gain is guaranteed to obtain control.
Consequently, we obtain a greedy solution to maximize the global reward (the sum of local cumulative rewards).
Further, from Eq.~\eqref{eq:losing continuation value} it follows that the control gains are actually environment-aware, i.e., if the goal of policy $i$ is aligned with other policies (like for the cat feeder, all cats are crowded in one corner), then the control gain of $i$ would be automatically low as it would not mind losing the auction.
We state our two results here but defer the proofs to~\Cref{app:poorman-control-gain-proof,app:allpay-control-gain-proof}.

\begin{theorem}[\poorman favors high control gain]\label{thm:poorman-favors-high-control-gain}
	In the \poorman setting, consider a single bidding round $t$ at the agent state $s\in S$.
  	Suppose the parameters $\rho$ and $\beta$ are so chosen that $\rho\beta > \max_{j\in [1;m]}G_{j,t}(s)$.
	Let $i^*$ be the policy with the largest control gain, such that the following margin condition is fulfilled:
	\begin{align}\label{eq:all pay:gross gain gap}
		G_{i^*,t}(s) - \max_{j\neq i^*}G_{j,t}(s) > \rho.
	\end{align}
	Then in any pure Nash equilibrium, policy $i^*$ wins control.
\end{theorem}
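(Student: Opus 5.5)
The plan is to argue by contradiction directly on the one-round bidding payoff \(U^{\poorman}_{i,t}(s,b)=\widetilde V^{\lose}_{i,t}(s)+p_i(b)\bigl(G_{i,t}(s)-\rho b_i\bigr)\). Since \(\widetilde V^{\lose}_{i,t}(s)\) does not depend on \(b\), each player \(i\) effectively maximizes the ``net gain term'' \(p_i(b)(G_{i,t}(s)-\rho b_i)\). Throughout I abbreviate \(G_j=G_{j,t}(s)\).

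\textbf{Step 1: control gains are nonnegative.} \(\widetilde V^{\win}_{j,t}\) maximizes over \(j\)'s own action policies. \(\widetilde V^{\lose}_{j,t}\) evaluates the same objective under some other policy's action policy, and that policy is one candidate in this maximization. Hence \(G_j\ge 0\) for all \(j\).

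\textbf{Step 2: individual rationality of winners.} Fix a pure Nash equilibrium \(b\) with maximum bid \(M\), and take any \(j\in W(b)\). I claim \(p_j(b)(G_j-\rho M)\ge 0\), i.e.\ \(\rho M\le G_j\) whenever \(M>0\).
\begin{itemize}
\item If \(M>0\), then \(j\) could deviate to bid \(0\). This either loses outright, giving net term \(0\), or ties at \(0\), giving a term \(\ge 0\) by Step 1.
\item If \(M=0\), the inequality is immediate from Step 1.
\end{itemize}
In both cases \(\rho M\le G_j\). Combined with \(\rho\beta>\max_j G_j\), this gives \(M<\beta\). Because bids are integers, \(M+1\in[0;\beta]\), so overbidding by one unit is always a feasible deviation. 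This is exactly where the assumption \(\rho\beta>\max_j G_j\) and the integrality of \poorman bids are used.

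\textbf{Step 3: \(i^*\) is among the highest bidders.} Suppose \(i^*\notin W(b)\). Then its net term is \(0\). Pick \(j\in W(b)\), so \(j\neq i^*\). Deviating to \(M+1\) makes \(i^*\) the unique winner, with net term
\[
G_{i^*}-\rho M-\rho \;\ge\; G_{i^*}-G_j-\rho \;>\; 0,
\]
using Step 2 and the margin condition~\eqref{eq:all pay:gross gain gap}. This contradicts equilibrium, so \(i^*\in W(b)\) in every pure Nash equilibrium.

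\textbf{Main obstacle: ties.} I expect the hard part to be the case where \(i^*\) ties with others, \(|W(b)|=k\ge 2\). Write \(x=G_{i^*}-\rho M\) and \(y=G_j-\rho M\ge 0\) for a tied competitor \(j\). The deviation to \(M+1\) yields \(x-\rho\), against the current payoff \(x/k\). The margin only guarantees \(x>y+\rho\ge\rho\), which does not force \(x-\rho> x/k\).

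For instance, take \(G_{i^*}=1.5\rho\), \(G_j=0\), and both players bidding \(0\). Then \(i^*\) gets \(0.75\rho\) from the tie, while deviating to \(1\) gives only \(0.5\rho\). So this tie is a pure equilibrium.

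I would therefore first try to read ``wins control'' as ``\(i^*\in W(b)\), hence wins with positive probability'', which Step 3 establishes. To obtain \emph{unique} winning, I would strengthen the argument as follows. The deviation beats the tie iff \(x(1-1/k)>\rho\). Since \(k\ge 2\), this holds whenever \(x>2\rho\), which in turn follows from a margin of \(2\rho\) in place of \(\rho\). Alternatively, one can check whether the intended tie-breaking convention excludes such ties.
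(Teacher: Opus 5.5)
Your Steps~1--3 are essentially the paper's proof: a winning $j$ must have $\rho b_j \le G_{j,t}(s) < \rho\beta$, so the integer overbid $b_j+1$ is feasible and, by the margin, strictly profitable for $i^*$. You also make explicit two things the paper's ``$j$ would be better off by losing'' step uses only implicitly, namely the nonnegativity of control gains and the case where that deviation produces a tie at bid $0$.

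Your tie analysis is also correct. The paper's contradiction hypothesis only treats $b_{i^*}<b_j$, so its argument likewise establishes only $i^*\in W(b)$. Your example shows that winning with certainty genuinely fails under the stated margin. Your $2\rho$ strengthening repairs this and cannot be lowered: with $m=2$, $G_{j,t}(s)=0$ and $G_{i^*,t}(s)\in(\rho,2\rho]$, both policies bidding $0$ is still a pure equilibrium.
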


\begin{theorem}[\allpay favors high control gain]\label{thm:allpay-favors-high-control-gain}
	In the \allpay setting with $m\geq 3$, consider a single bidding round $t$ at the agent state $s\in S$.
	Let $i^*$ and $j^*$ be policies for which the following gross gain order holds:
	\begin{align}\label{eqn:allpay urgency gap}
		G_{i^*,t} > G_{j^*,t} > \max_{k\notin \{i^*,j^*\}}G_{k,t}.
	\end{align}
	Then there is no Nash equilibrium with pure strategies, although a unique Nash equilibrium exists with mixed strategies.
	In this Nash equilibrium, policy $i^*$ wins control with probability $1-\sfrac{G_{j^*,t}}{2G_{i^*,t}}$.	
\end{theorem}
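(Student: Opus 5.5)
The plan is to reduce the one-round bidding game at $(s,t)$ to the classical complete-information all-pay auction and then apply the characterization of \citet{baye1996all}. In the \allpay case the payoff is $U^{\allpay}_{i,t}(s,b)=\widetilde V^{\lose}_{i,t}(s)+p_i(b)G_{i,t}(s)-\rho b_i$. The term $\widetilde V^{\lose}_{i,t}(s)$ does not depend on the bids, so it changes no best response and can be dropped. After the change of variables $x_i=\rho b_i\in[0,\rho\beta]$, player $i$ receives $p_i(x)\,v_i-x_i$ with valuation $v_i\coloneqq G_{i,t}(s)$. This is exactly an all-pay auction with valuations $v_{i^*}>v_{j^*}>\max_{k\notin\{i^*,j^*\}}v_k$ and uniform tie-breaking.

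The argument needs two implicit side conditions, which I would state explicitly.
\begin{itemize}
\item The gains must be nonnegative, $G_{k,t}(s)\ge 0$. For the two top policies this is needed so that they actually compete.
\item The bid cap must not bind, i.e.\ $\rho\beta\ge G_{j^*,t}(s)$ (the analogue of the hypothesis $\rho\beta>\max_j G_{j,t}(s)$ in Theorem~\ref{thm:poorman-favors-high-control-gain}). This guarantees that the uncapped equilibrium supports $[0,v_{j^*}]$ fit inside $[0,\rho\beta]$.
\end{itemize}

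\textbf{No pure equilibrium.} Suppose a pure profile $x$ were an equilibrium and case-split on the set $W(x)$ of highest bidders.
\begin{itemize}
\item If a unique highest bidder has $x_i>\max_{j\ne i}x_j$, it can lower its bid to any value strictly between the second-highest bid and $x_i$. It still wins and pays less, which is a profitable deviation.
\item If several players tie at a common level $\bar x>0$, a tied player with $v_k>0$ wins with probability at most $1/2$. Raising its bid by a small $\varepsilon$ makes it win outright, changing its payoff by at least $v_k/2-\varepsilon>0$. If instead a tied player has $v_k=0$, it strictly prefers to bid $0$.
\item If all bids are $0$, then $i^*$ wins with probability $1/m$. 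Deviating to a tiny $\varepsilon>0$ makes it win with certainty, a gain of at least $(1-1/m)v_{i^*}-\varepsilon>0$.
\end{itemize}
Every case admits a profitable deviation, so no pure equilibrium exists.

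\textbf{Existence and uniqueness of the mixed equilibrium.} I will invoke Baye--Kovenock--de Vries. When the top two valuations are strictly separated from each other and from the rest (which is exactly condition~\eqref{eqn:allpay urgency gap}), the all-pay auction has a unique Nash equilibrium, with the following structure.
\begin{itemize}
\item Player $i^*$ mixes uniformly on $[0,v_{j^*}]$.
\item Player $j^*$ places an atom of mass $1-v_{j^*}/v_{i^*}$ at $0$ and spreads the remaining mass uniformly on $(0,v_{j^*}]$ with density $1/v_{i^*}$.
\item All other players bid $0$ with probability one.
\end{itemize}
To verify directly that this is an equilibrium, I would check indifference. Against $j^*$'s CDF $F_{j^*}(x)=1-v_{j^*}/v_{i^*}+x/v_{i^*}$, player $i^*$ earns the constant $v_{i^*}-v_{j^*}$ on its support. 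Against $i^*$'s CDF $x/v_{j^*}$, player $j^*$ earns $0$. Any player $k$ with $v_k<v_{j^*}$ earns $v_kx/v_{j^*}-x<0$ for any positive bid, so bidding $0$ is optimal. Bids above $v_{j^*}$ are dominated for everyone.

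I expect uniqueness to be the main obstacle. This is where $m\ge 3$ and the strict gap $v_{j^*}>v_k$ matter: with ties among lower valuations, BKV exhibit a continuum of equilibria. The plan is to cite their uniqueness theorem rather than re-derive it. The re-derivation would show that no atoms exist above $0$, that the supports have no gaps, that both top players share the upper endpoint $v_{j^*}$, and that lower players must bid $0$, since otherwise they would secure positive win probability at a cost exceeding their value.

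\textbf{Winning probability.} Since the other players bid $0$ and $i^*$ has no atom, ties at $0$ have probability zero, so the win probability is $\Pr[x_{i^*}>x_{j^*}]$. With probability $1-v_{j^*}/v_{i^*}$, player $j^*$ bids $0$ and $i^*$ wins almost surely. With probability $v_{j^*}/v_{i^*}$, both bids are uniform on $[0,v_{j^*}]$ and independent, so each wins with probability $1/2$. Hence $i^*$ wins with probability
\[
\Bigl(1-\frac{v_{j^*}}{v_{i^*}}\Bigr)+\frac{v_{j^*}}{2v_{i^*}}=1-\frac{G_{j^*,t}}{2G_{i^*,t}},
\]
as claimed.
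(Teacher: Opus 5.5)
Your proposal is correct and follows the paper's route. You reduce the round to a complete-information all-pay auction with valuations $G_{k,t}(s)$, invoke Baye--Kovenock--de Vries for the unique mixed equilibrium, and compute $\Pr[b_{i^*}>b_{j^*}]$; your conditioning on $j^*$'s atom is equivalent to the paper's integral. Your cap condition $\rho\beta\ge G_{j^*,t}(s)$ is a hypothesis the paper uses silently but genuinely needs. Without it the claim fails: if $\max_{k\notin\{i^*,j^*\}}G_{k,t}(s)/3\le\rho\beta\le G_{j^*,t}(s)/2$, then both top policies bidding $\beta$ with the rest at $0$ is a pure equilibrium.

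Two small remarks.
\begin{enumerate}
\item In your pure-strategy case analysis, a tie at the cap $\bar x=\rho\beta$ cannot be broken by raising. There, however, some tied player $k\neq i^*$ gains $\rho\beta-G_{k,t}(s)/|W|\ge G_{j^*,t}(s)/2>0$ by dropping to $0$, using the very cap condition you imposed.
\item Your nonnegativity condition is automatic. Since $\widetilde V^{\win}_{i,t}$ maximizes over a class of action policies containing each $\pi_j^a$, you get $G_{i,t}\ge 0$.
\end{enumerate}
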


While these results do not automatically imply equivalence to the centralized optimal solution, we obtain an environment-aware, greedy approximation.
This provides us a scheduling perspective on the auction: each bidding round picks the objective to receive control based on its urgency.
In Appendix~\ref{app:deterministic-path-planning}, we prove that in deterministic environments with stationary targets and varying deadlines, a Nash equilibrium in our auction mechanism always selects critically urgent goals, if any.
This matches the least slack time first (LSTF) rule, a widely used scheduling heuristic~\citep{brown2020optimal}.

%% file: learning.tex
\section{Learning Local Policies in Markov Bidding Games.}\label{sec:learning-local-policies}

\begin{figure*}[t]
  \centering
  \begin{subfigure}[t]{0.28\textwidth}
    \centering
    \begin{adjustbox}{width=\linewidth,center,trim=1.5cm 0 0 0,clip}
      \input{img/standard_control_overview.tex}
    \end{adjustbox}
    \caption{Monolithic policy}
    \label{fig:standard_control_overview}
  \end{subfigure}
  \hfill
  \begin{subfigure}[t]{0.7\textwidth}
    \centering
    \begin{adjustbox}{width=\linewidth,trim=3cm 0 0 0,clip,center}
      \input{img/modular_auction_overview.tex}
    \end{adjustbox}
    \caption{Auction-based multi-policy framework}
    \label{fig:modular_auction_overview}
  \end{subfigure}

  \caption{Comparison between the standard monolithic setting and our auction-based framework.}
  \label{fig:intro_control_overview}
\end{figure*}

Our goal is to learn the local policy of each player in the Markov bidding game from~\Cref{def:markov-bidding-game} using reinforcement learning.
\Cref{fig:modular_auction_overview} visualizes the architecture of our framework.
Yu et al.~\cite{yu2022mappo} demonstrate that proximal policy optimization (PPO; \cite{schulman2017ppo}) can be effective to train multiple policies concurrently.
We follow their recipe and also train the local policies concurrently with one instance of PPO per policy.
At each iteration, rollouts are collected from the environment by executing the policies, and each PPO instance uses them.
In our experiments, the objectives are symmetric, so we share the actor-critic network across local policies; this reduces the number of learned parameters while preserving separate policy roles through their objective-specific inputs.
This sharing also improves sample efficiency: from the same set of rollouts, the shared network receives PPO updates from all $m$ local-policy objectives rather than from a single objective alone.
Hyperparameters, network sizes, and rollout settings are reported in Appendix~\ref{sec:reprod-details}.

In the practical implementation, we use a discrete bid space rather than the continuous interval in the~\allpay~mechanism.
Each policy selects its bid from a small finite set of levels, so the bid can be represented as an additional categorical action and trained with the same PPO machinery as the environment action.
This discretization is sufficient in our experiments: the bid-level ablation in~\Cref{fig:gridworld-bid-upper-bound} shows that only a small number of levels is needed for effective coordination.

Additionally, to cope with varying number of objectives we employ the \textit{attention pooling} architecture to compress any number of objectives into a fixed-sized embedding~\citep{zaheer2017deep,lee2019set}. 
This design enables the agent to operate with an arbitrary number of policies (i.e., objectives) at deployment time and extract useful information from the global state. 
Formally, the attention pooling module is a permutation-invariant map $\textsf{AP}: \bigcup_{m \geq 1}(\mathbb{R}^{d_z})^m \to \mathbb{R}^{d_h}$ from a variable-size collection of objective vectors to a fixed-dimensional embedding.
Let $(z_1,\ldots,z_m)$ represent the current state of the system, where $z_j$ denotes the information vector for objective $j$; for example, $z_j$ may encode the position of a target.
Each objective information vector is first processed by a fully connected neural network $f_\psi$ to produce an embedding $h_j = f_\psi(z_j)$. 
Given a learned query vector $q$, we compute attention weights and the pooled embedding as 
$
\alpha_j
  =
  \sfrac{\exp(q^\top h_j)}
       {\sum_{\ell=1}^{m} \exp(q^\top h_\ell)},
$ and 
$
h_{\mathrm{pool}}
	  =
	  \sum_{j=1}^{m} \alpha_j h_j 
$.
%
The final embedding produced by the module is $\textsf{AP}(z_1,\ldots,z_m)=h_{\mathrm{pool}}$.
Each local policy has its own attention pooling module. 
The attention-pooling parameters are trained jointly with the rest of the policy network during PPO.
This attention mechanism produces a fixed-dimensional representation that summarizes the varying set of objectives.

%% file: img/standard_control_overview.tex
\newcommand{\stdfigblockwidth}{3.5cm}
\newcommand{\stdfigblockheight}{1.45cm}
\newcommand{\stdfigcornerradius}{2pt}
\newcommand{\stdfigblockinnersep}{6pt}
\newcommand{\stdfigvsep}{2.1cm}
\newcommand{\stdfigtitlesep}{0.9cm}
\newcommand{\stdfigactionright}{1.0cm}
\newcommand{\stdfigstateleft}{1.2cm}
\newcommand{\stdfigactionlabelpos}{0.82}
\newcommand{\stdfigvertlabelpos}{0.5}
\newcommand{\stdfigbboxleftpad}{0.8cm}
\newcommand{\stdfigbboxrightpad}{0.8cm}
\newcommand{\stdfigbboxtoppad}{0.7cm}
\newcommand{\stdfigbboxbottompad}{5cm}
\newcommand{\stdfigflowwidth}{1.05pt}
\newcommand{\stdfigfeedbackwidth}{0.95pt}

\begin{tikzpicture}[
  >=Latex,
  font=\normalsize,
  node distance=\stdfigvsep,
  title/.style={font=\bfseries\large},
  block/.style={
    draw,
    rounded corners=\stdfigcornerradius,
    align=center,
    minimum width=\stdfigblockwidth,
    minimum height=\stdfigblockheight,
    inner sep=\stdfigblockinnersep
  },
  flow/.style={->, line width=\stdfigflowwidth},
  feedback/.style={->, line width=\stdfigfeedbackwidth},
  rewardfeedback/.style={->, line width=\stdfigfeedbackwidth, dotted}
]
  \node[block, fill=blue!10, draw=blue!45!black] (policy) {Policy $\pi$};
  \node[block, below=of policy, fill=gray!12, draw=gray!55!black] (env) {Multi-Objective\\Environment};

  \coordinate (actionright) at ($(policy.east)+(\stdfigactionright,0)$);
  \coordinate (stateleft) at ($(env.west)+(-\stdfigstateleft,0)$);
  \coordinate (actionup) at (actionright |- policy.east);
  \coordinate (actiondown) at (actionright |- env.east);
  \coordinate (stateup) at (stateleft |- policy.west);
  \coordinate (statedown) at (stateleft |- env.west);
  \coordinate (bboxnw) at ($(stateleft |- env.north)+(-\stdfigbboxleftpad,\stdfigbboxtoppad)$);
  \coordinate (bboxse) at ($(actionright |- policy.south)+(\stdfigbboxrightpad,-\stdfigbboxbottompad)$);

  \draw[rewardfeedback] (env) -- node[pos=\stdfigvertlabelpos, right, font=\small] {$\sum_{i=1}^m r_i^{(t)}$} (policy);
  \draw[flow] (policy.east) -- (actionup) -- node[pos=\stdfigvertlabelpos, right, font=\small] {$a^{(t)}$} (actiondown) -- (env.east);
  \draw[feedback] (env.west) -- (statedown) -- node[pos=\stdfigvertlabelpos, left, font=\small] {} node[pos=\stdfigvertlabelpos, right, font=\small] {$\vec s^{(t)}$} (stateup) -- (policy.west);

  \path[use as bounding box]
    (bboxnw)
    rectangle
    (bboxse);
\end{tikzpicture}

%% file: img/modular_auction_overview.tex
\newcommand{\figblockwidth}{4.2cm}
\newcommand{\figblockheight}{1.45cm}
\newcommand{\figpolicywidth}{2.0cm}
\newcommand{\figpolicyheight}{1.0cm}
\newcommand{\figdotgapabove}{0.05cm}
\newcommand{\figdotgapbelow}{0.3cm}
\newcommand{\figapdotgapabove}{0.05cm}
\newcommand{\figapdotgapbelow}{0.3cm}
\newcommand{\figapwidth}{3.25cm}
\newcommand{\figauctionwidth}{3.1cm}
\newcommand{\figauctionheight}{4.3cm}
\newcommand{\figcornerradius}{2pt}
\newcommand{\figblockinnersep}{6pt}
\newcommand{\figsmallinnersep}{4pt}
\newcommand{\fighsep}{2.35cm}
\newcommand{\figauccentersep}{5.65cm}
\newcommand{\figapcentercorrection}{0.575cm}
\newcommand{\figapshift}{2.0cm}
\newcommand{\figvsep}{0.8cm}
\newcommand{\figfeedbackdrop}{1.55cm}
\newcommand{\figtitlesep}{0.9cm}
\newcommand{\fignotesep}{0.8cm}
\newcommand{\figaptitlegap}{0.14cm}
\newcommand{\figenvgap}{0.8cm}
\newcommand{\figmergeleft}{1.5cm}
\newcommand{\figstateleft}{0.4cm}
\newcommand{\figactionlabelpos}{0.24}
\newcommand{\figstatelabelxshift}{-0.9cm}
\newcommand{\figpolicylabelpos}{0.78}
\newcommand{\figpolicylabelxshift}{-1.3cm}
\newcommand{\figpolicylabelyshift}{-0.35cm}
\newcommand{\figaplabelpos}{0.02}
\newcommand{\figaplabelxshift}{2.5cm}
\newcommand{\figaplabelyshift}{0.2cm}
\newcommand{\figcenterpolicylabelxshift}{-0.5cm}
\newcommand{\figrewardlabelxshift}{0.05cm}
\newcommand{\figrewardjoinfrac}{0.3}
\newcommand{\figrewardarcradiusfrac}{0.8}
\newcommand{\figbboxheighttop}{0.5cm}
\newcommand{\figbboxheightbottom}{0.5cm}
\newcommand{\figbboxleftpad}{0cm}
\newcommand{\figbboxrightpad}{1cm}
\newcommand{\figflowwidth}{1.0pt}
\newcommand{\figfeedbackwidth}{0.9pt}

\begin{tikzpicture}[
  >=Latex,
  font=\normalsize,
  node distance=\figvsep and \fighsep,
  title/.style={font=\bfseries\large},
  block/.style={
    draw,
    rounded corners=\figcornerradius,
    align=center,
    minimum width=\figblockwidth,
    minimum height=\figblockheight,
    inner sep=\figblockinnersep
  },
  smallblock/.style={
    draw,
    rounded corners=\figcornerradius,
    align=center,
    minimum width=\figpolicywidth,
    minimum height=\figpolicyheight,
    inner sep=\figsmallinnersep
  },
  apblock/.style={
    draw,
    rounded corners=\figcornerradius,
    align=center,
    minimum width=\figapwidth,
    minimum height=\figpolicyheight,
    inner sep=\figsmallinnersep
  },
  auction/.style={
    draw,
    rounded corners=\figcornerradius,
    align=center,
    minimum width=\figauctionwidth,
    minimum height=\figauctionheight,
    inner sep=\figblockinnersep
  },
  flow/.style={->, line width=\figflowwidth},
  feedback/.style={->, line width=\figfeedbackwidth},
  rewardfeedback/.style={->, line width=\figfeedbackwidth, dotted},
  note/.style={align=center, font=\small}
]
  \node[smallblock, fill=blue!10, draw=blue!45!black] (pol1) {$\pi_1$};
  \node[smallblock, below=of pol1, fill=green!10, draw=green!45!black] (pol2) {$\pi_{i^\star}$};
  \node[smallblock, below=of pol2, fill=orange!12, draw=orange!55!black] (polm) {$\pi_m$};
  \node[font=\Large] at ($(pol2.north)+(0,\figpolicyheight/2+\figdotgapabove)$) {$\vdots$};
  \node[font=\Large] at ($(pol2.south)+(0,-\figdotgapbelow)$) {$\vdots$};

  \node[auction, fill=purple!8, draw=purple!45!black] (auc) at ($(pol2.center)+(\figauccentersep,0)$) {$i^\star \in \arg\max_i b_i^{(t)}$\\[4pt]\footnotesize ties broken\\\footnotesize uniformly at random};
  \node[apblock, fill=green!10, draw=green!45!black] (ap2) at ($(pol2.center)+(-\figauccentersep-\figapcentercorrection-\figapshift,0)$) {$\textsf{AP}_2 : S \to \mathbb{R}^{d_h}$};
  \node[apblock, above=of ap2, fill=blue!10, draw=blue!45!black] (ap1) {$\textsf{AP}_1 : S \to \mathbb{R}^{d_h}$};
  \node[apblock, below=of ap2, fill=orange!12, draw=orange!55!black] (apm) {$\textsf{AP}_m : S \to \mathbb{R}^{d_h}$};
  \node[font=\Large] at ($(ap2.north)+(0,\figpolicyheight/2+\figapdotgapabove)$) {$\vdots$};
  \node[font=\Large] at ($(ap2.south)+(0,-\figapdotgapbelow)$) {$\vdots$};
  \node[note, above=\figaptitlegap of ap1] (aptitle) {\small\bfseries Attention pooling\\[-1pt]\scriptsize\normalfont Extract global state features};
  \node[note, font=\small\bfseries] at (pol2.center |- aptitle.center) {Policies};
  \node[note, font=\small\bfseries] at (auc.center |- aptitle.center) {Auction};
  \coordinate (statebus) at ($(ap2.west)+(-\figstateleft,0)$);
  \coordinate (rewardjoin1) at ($(ap1.east)!\figrewardjoinfrac!(pol1.west)$);
  \coordinate (rewardjoin2) at ($(ap2.east)!\figrewardjoinfrac!(pol2.west)$);
  \coordinate (rewardjoinm) at ($(apm.east)!\figrewardjoinfrac!(polm.west)$);
  \path ($(apm.south)!0.5!(auc.south)$) node[block, below=\figenvgap, fill=gray!12, draw=gray!55!black] (env) {Multi-Objective\\Environment};
  \coordinate (clipnw) at ($(ap1.west |- aptitle.north)+(-\figbboxleftpad,\figbboxheighttop)$);
  \coordinate (clipse) at ($(auc.east |- env.south)+(\figbboxrightpad,-\figbboxheightbottom)$);

  \draw[flow] (pol1.east) -- node[above, font=\small] {$(b_1^{(t)}, a_1^{(t)})$} (auc.west |- pol1.east);
  \draw[flow] (pol2.east) -- node[above, font=\small] {$(b_{i^\star}^{(t)}, a_{i^\star}^{(t)})$} (auc.west |- pol2.east);
  \draw[flow] (polm.east) -- node[above, font=\small] {$(b_m^{(t)}, a_m^{(t)})$} (auc.west |- polm.east);

  \draw[flow] (auc.south) |- node[right, font=\small, pos=\figactionlabelpos] {$a_{i^\star}^{(t)}$} (env.east);

  \draw[line width=\figfeedbackwidth] (env.west) -- (statebus |- env.west) coordinate (statecorner) -- (statebus);
  \draw[feedback] (statebus) |- (ap1.west);
  \draw[feedback] (statebus) -- (ap2.west);
  \draw[feedback] (statebus) |- (apm.west);
  \path ($(env.west)!0.5!(statecorner)$) node[below, font=\small, xshift=\figstatelabelxshift] {$\vec s^{(t)}$};
  \draw[line width=\figflowwidth] (env.west) -- ++(-\figmergeleft,0) coordinate (envrewardcorner);
  \path ($(env.west)!0.5!(envrewardcorner)$) node[below, font=\small, xshift=\figrewardlabelxshift] {$\vec r^{(t)}$};
  \newcommand{\drawrewardarc}[1]{%
    \draw[rewardfeedback]
      let
        \p1=(envrewardcorner),
        \p2=(#1),
        \n1={veclen(\x2-\x1,\y2-\y1)},
        \n2={\figrewardarcradiusfrac*\n1},
        \n3={2*asin(\n1/(2*\n2))},
        \n4={atan2(\y2-\y1,\x2-\x1)}
      in
        (envrewardcorner) arc[start angle=\n4+90+\n3/2, end angle=\n4+90-\n3/2, radius=\n2];
  }
  \drawrewardarc{rewardjoin1}
  \drawrewardarc{rewardjoin2}
  \drawrewardarc{rewardjoinm}
  \draw[feedback] (ap1.east) -- node[pos=\figaplabelpos, above, font=\small, anchor=west, xshift=\figaplabelxshift, yshift=\figaplabelyshift] {$\textsf{AP}_1(s^{(t)})$} node[pos=\figpolicylabelpos, below, font=\small, xshift=\figpolicylabelxshift, yshift=\figpolicylabelyshift] {$r_1^{(t)} - \rho b_1^{(t)}\cdot \mathbf{1}[\text{\allpay}]$} (pol1.west);
  \draw[feedback] (ap2.east) -- node[pos=\figaplabelpos, above, font=\small, anchor=west, xshift=\figaplabelxshift, yshift=\figaplabelyshift] {$\textsf{AP}_{i^\star}(s^{(t)})$} node[midway, below, font=\small, xshift=\figcenterpolicylabelxshift, yshift=\figpolicylabelyshift] {$r_{i^\star}^{(t)} - \rho b_{i^*}^{(t)}  $} (pol2.west);
  \draw[feedback] (apm.east) -- node[pos=\figaplabelpos, above, font=\small, anchor=west, xshift=\figaplabelxshift, yshift=\figaplabelyshift] {$\textsf{AP}_m(s^{(t)})$} node[pos=\figpolicylabelpos, below, font=\small, xshift=\figpolicylabelxshift, yshift=\figpolicylabelyshift] {$r_m^{(t)} - \rho b_m^{(t)}\cdot \mathbf{1}[\text{\allpay}]$} (polm.west);
  \path[use as bounding box]
    (clipnw)
    rectangle
    (clipse);

\end{tikzpicture}

%% file: implementation-eval.tex
\section{Experimental Results}\label{sec:experimental-results}

The framework presented so far is compatible with arbitrary multi-objective environments.
However, for evaluating our method, we consider environments that consist of multiple evolving reachability objectives.
We show the superior performance of our method compared to baselines including standard PPO. 
We also perform ablations with respect to all the key components of the framework; and draw insights on their importance and how we can tune hyperparameters.
A detailed description of the environments and the baseline approaches is provided in Appendix~\ref{app:experimental setup}, below we include a summary.
The supplementary material includes our codebase and recorded rollouts of our bidding-based policies on the two environments.

\textbf{Environments.}
We evaluate our approach on three environments: a mobile cat feeder environment, introduced in Section~\ref{sec:introduction}, and the two Atari games Assault and Air-raid. 
In all these environments, the metric being optimized is the cumulative reward across all objectives.

In the Cat Feeder environment, a robot moves on a $30\times 30$ grid to feed up to $m$ cats that move stochastically. 
Each cat disappears after a fixed number of steps; feeding it yields a reward, while disappearance incurs a penalty of equal magnitude. 
We train with $m=8$ cats in the environment and demonstrate generalization up to $14$. 
Each cat is served by a dedicated policy whose observation includes the robot position, the locations of active cats, and their remaining lifetimes.
As an ablation, we consider a variant of the environment without attention pooling.

In both the Atari games---Assault (Figure~\ref{fig:assault_preview}) and Air-raid---the agent controls a missile launcher to destroy incoming alien ships while avoiding enemy missiles~\citep{bellemare13arcade}. 
Although this is not traditionally seen as a multi-objective environment, we treat each ship as a separate objective and assign a local policy to pursue it, enabling a modular and interpretable design. 
In lieu of the standard pixel observations, we employ the object-centric variant of the Atari environments~\citep{delfosse2023ocatari} which provides the environment state information to the policies.
Destroying a target yields reward, losing a life incurs a penalty.
We do not use attention pooling in the Atari games since there are only up to 3 targets.

\textbf{Baseline approaches.}
We compare against two baselines: a single PPO policy optimizing the objective-aggregated reward and deep W-learning (DWN)~\citep{rosero2024multi}. 
These baselines also access the same object-centric observations in the Atari games.
The PPO baseline is implemented using CleanRL~\citep{huang2022cleanrl}. 
Since we provide distance-based reward shaping to help our policies learn, to ensure a fair comparison, we test two distance-based reward shaping strategies in the Cat Feeder environment.
Hyperparameters are provided in~\Cref{sec:reprod-details}. 
PPO parameters for Cat Feeder were tuned using 40 Optuna iterations~\citep{akiba2019optuna} while standard CleanRL Atari hyperparameters were used for the Atari games.
The baseline is trained with the same number of environment samples as our method.
DWN trains one DQN policy per objective together with W-networks that estimate urgency scores of states. 
We use the same action window \(\tau\) for DWN.

\begin{figure}
    \centering
    \begin{subfigure}{0.32\textwidth}
        \centering
        \includegraphics[width=\textwidth]{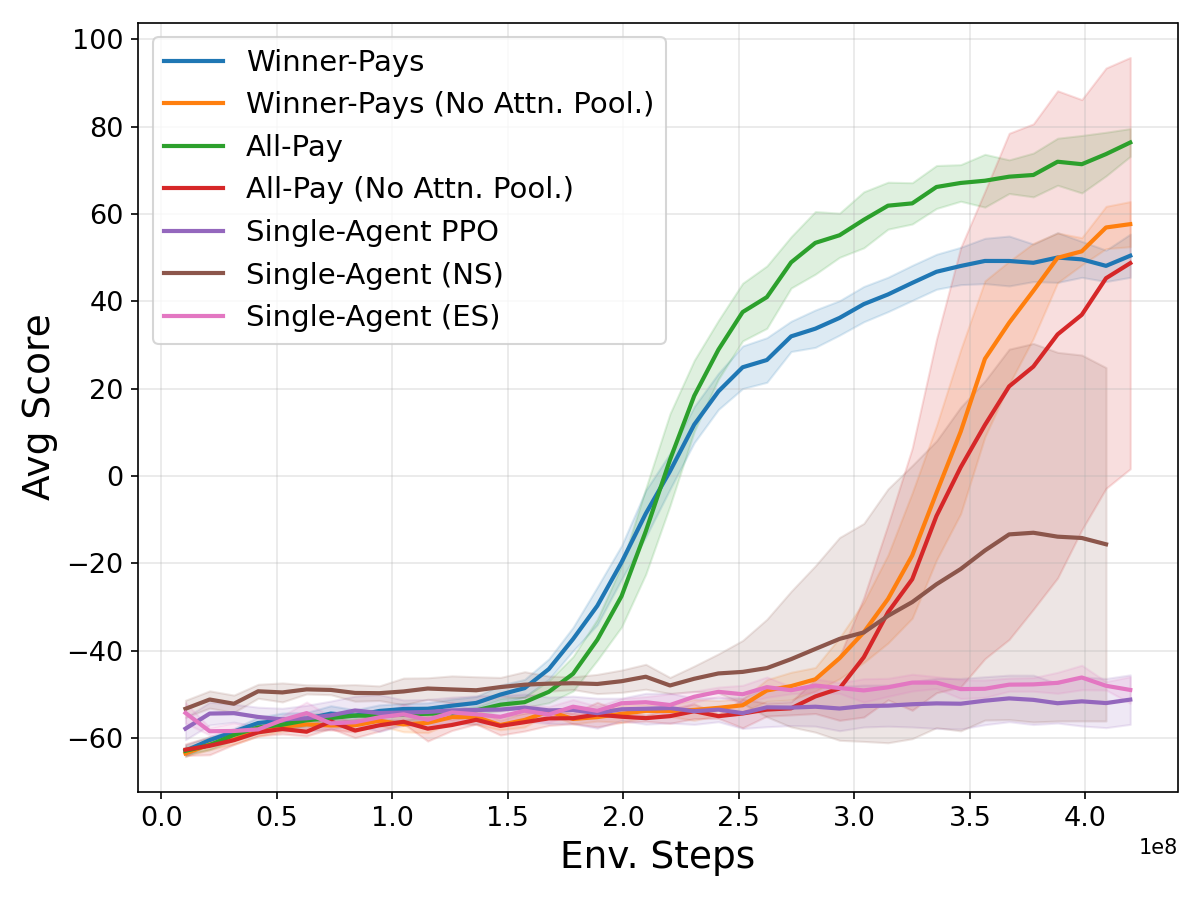}
    \end{subfigure}
    \hfill
    \begin{subfigure}{0.32\textwidth}
        \centering
        \includegraphics[width=\textwidth]{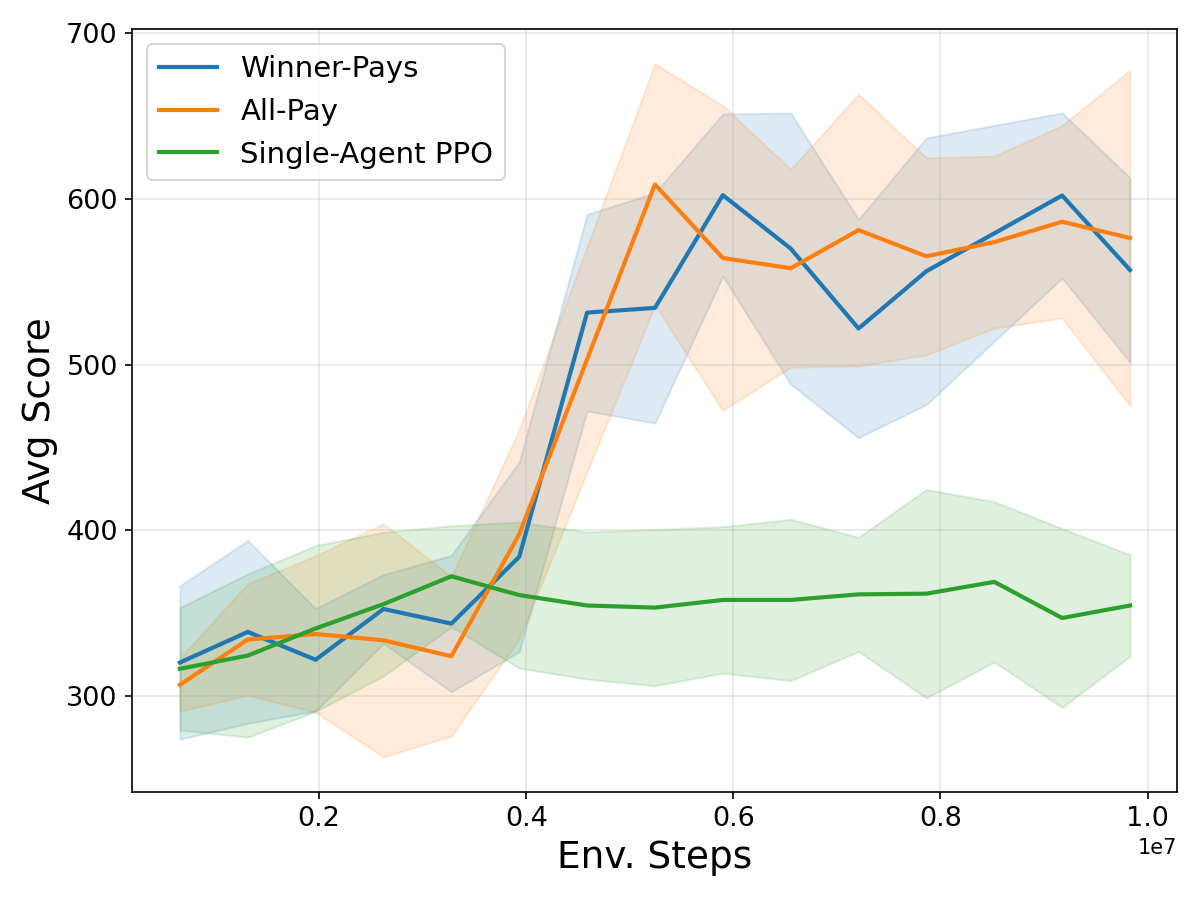}
    \end{subfigure}
    \hfill
    \begin{subfigure}{0.32\textwidth}
        \centering
        \includegraphics[width=\textwidth]{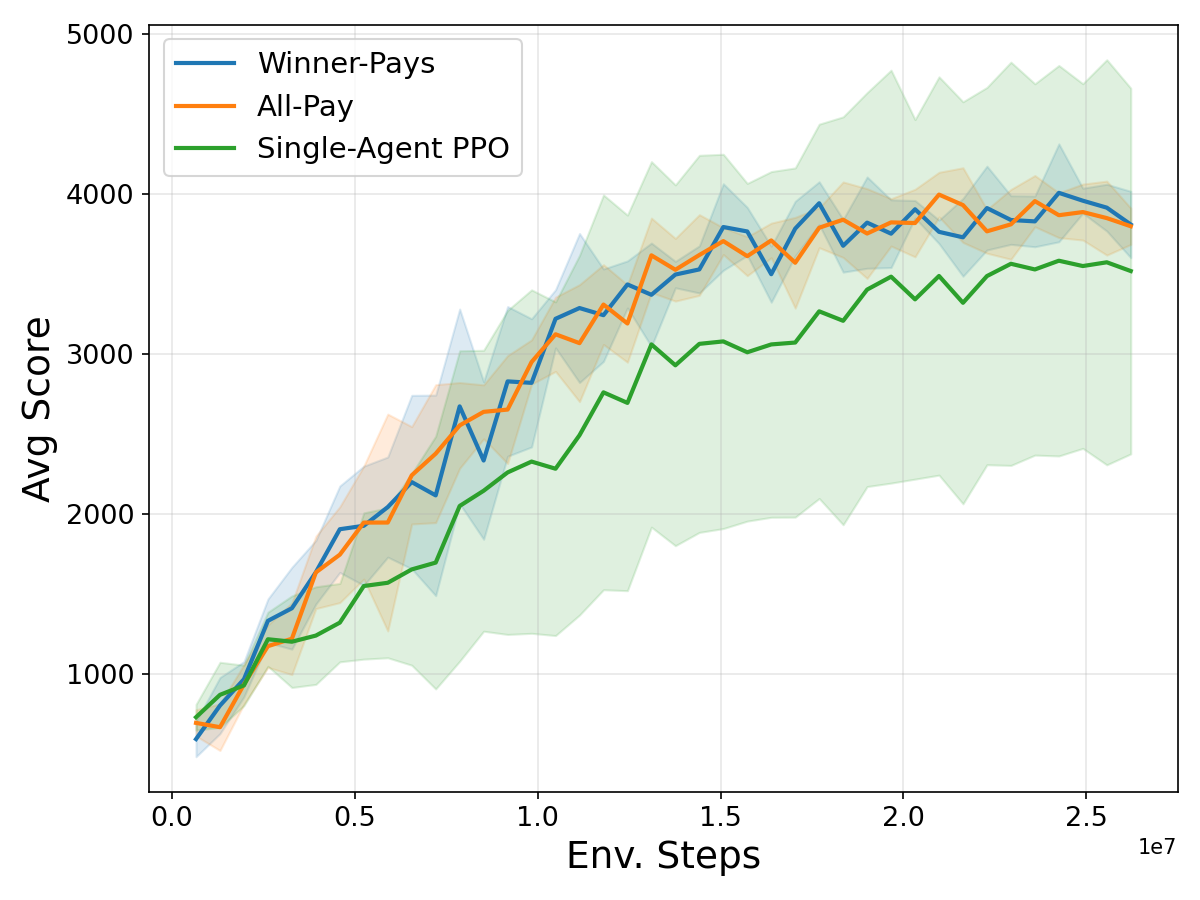}
    \end{subfigure}
    \caption{Learning curves for different methods on Cat Feeder (left), Atari Assault (center), and Atari Air-Raid (right). We plot mean over 10 seeds at each evaluation iteration and the shaded region is \(\pm\) standard deviation across seeds.}
    \label{fig:learning-curves}
\end{figure}

\input{tables/performance_table}

\textbf{Learning dynamics.}\quad \Cref{fig:learning-curves} shows the learning curves for multi-policy bidding and single-policy PPO algorithms on the three environments. 
While the curves are more noisy with Atari Assault, they exhibit clearly distinguishable sigmoidal shapes with the other two environments. 
The curves for \allpay and \poorman methods start growing around the same time.
\textbf{The variants without attention pooling for targets only start learning much later.}
This validates the use of the attention pooling architecture to learn useful embeddings of targets.
We remark that, based on the multi-agent PPO recipe of Yu et al.~\citep{yu2022mappo}, we also found that using a small clip coefficient (\({\sim}0.05\)) and large batch sizes was crucial to obtain stable but slower convergence of the policies towards an equilibrium.

\textbf{Comparison with baselines.}\quad \Cref{tab:all_methods_comp} lists the performance of our method as well as the baselines on both environments. 
Our multi-policy methods outperform the baselines by a substantial margin.
In the Cat Feeder and Atari Assault environments, we speculate that the poor performance of the single-policy PPO learner is due to the difficulty of credit assignment over a rollout in which there are multiple objectives that provide a positive reward.
Additional reward shaping only provides moderate gains.
In Atari Assault, since we only use \textit{single frame observations}, the learner could find it hard to associate the action of shooting and the reward from the death of a ship, especially if there is subsequent firing.
In Atari Air-raid, we provide concatenated four frame observations and here we see that the single-policy PPO is able to perform well.
On the other hand, \textbf{credit assignment becomes easier when we separate the objectives} and each learner only has to focus on one target, evident from the fact that it is able to learn even with only one frame.

DWN performs poorly in all experiments.
We recall that this method learns a W-network trained to predict the negative TD-error for each policy when the policy does not play its action.
However, in our environments, we have a long horizon and all the objectives are symmetric with similar Q-values.
This means that the TD-error is small and does not provide a strong signal to choose between policies at most timesteps.
In contrast, our approach allows the policies themselves to indicate the importance of playing an action.

\begin{figure}[h]
    \centering
    \begin{subfigure}[t]{0.31\textwidth}
        \centering
        \includegraphics[width=\textwidth]{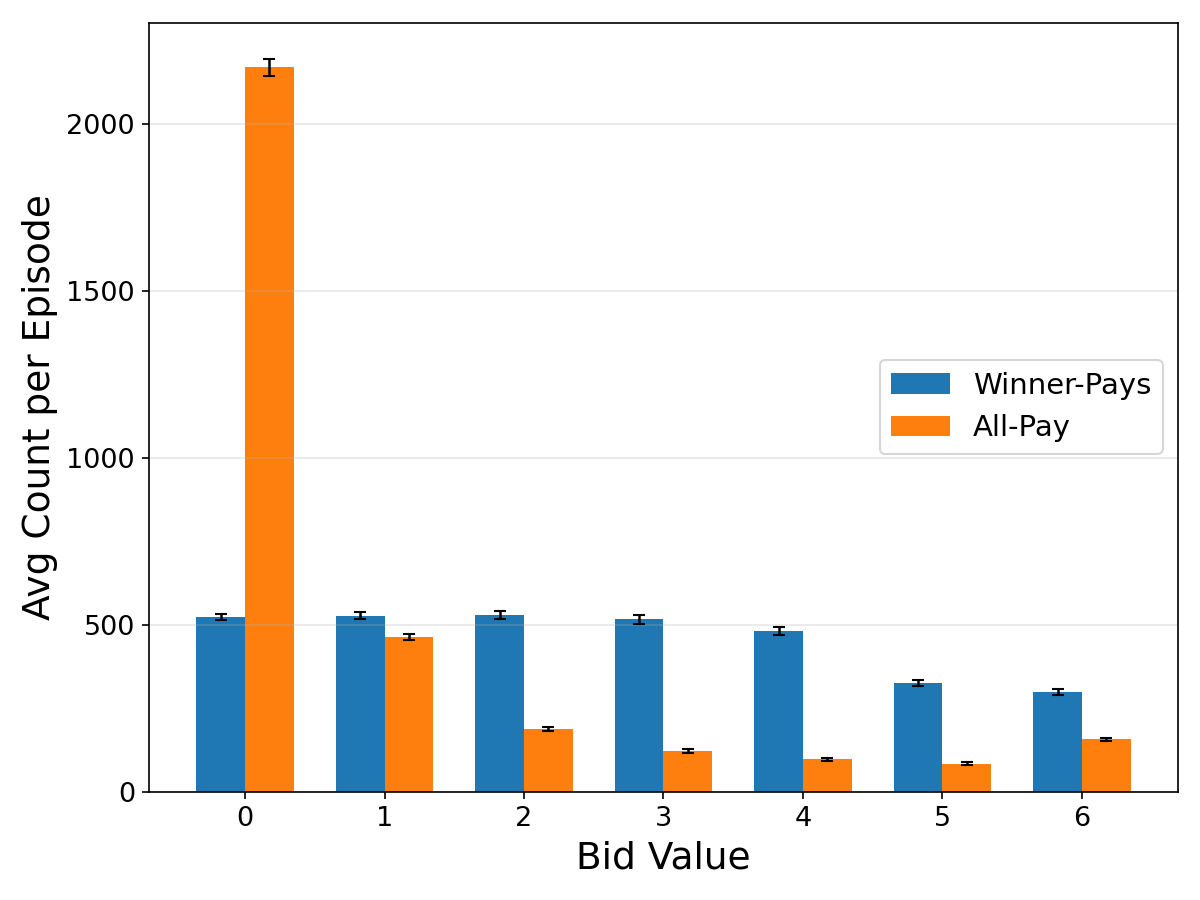}
        \caption{Cat Feeder: Bid distribution}
        \label{fig:dyntar-bid-dist}
    \end{subfigure}
    \hfill
    \begin{subfigure}[t]{0.25\textwidth}
        \centering
        \includegraphics[width=\textwidth]{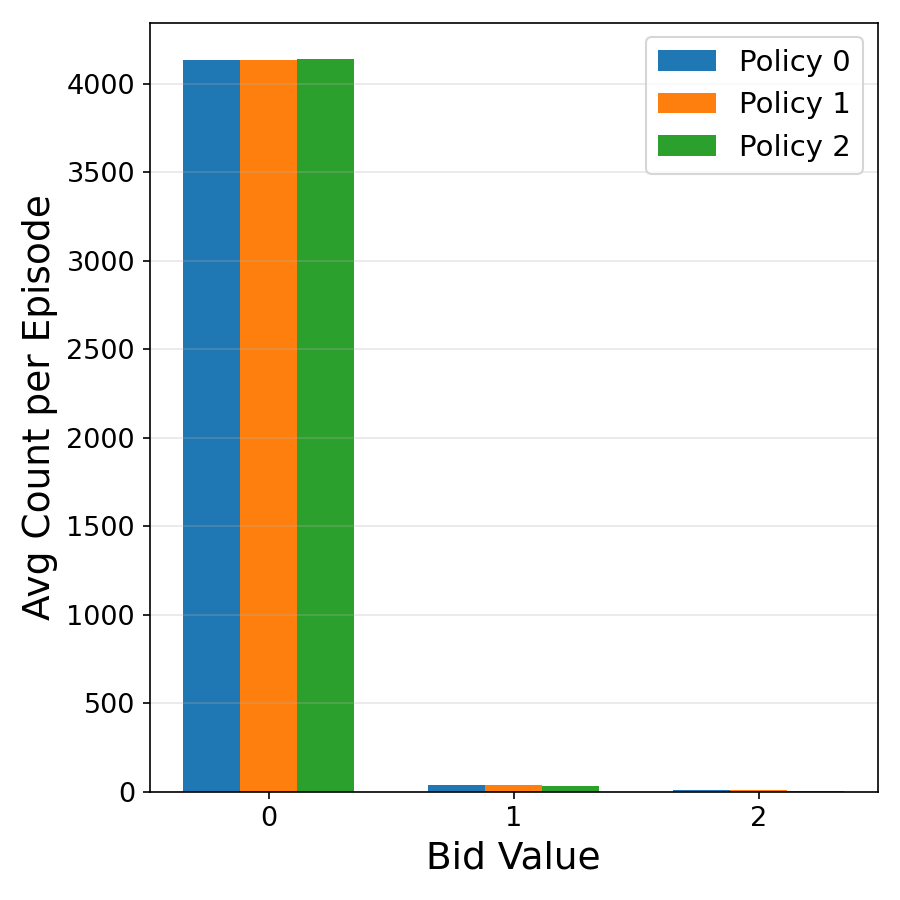}
        \caption{Atari Assault: Bid distribution (\poorman)}
        \label{fig:assault-bid-dist}
    \end{subfigure}
    \hfill
    \begin{subfigure}[t]{0.25\textwidth}
        \centering
        \includegraphics[width=\textwidth]{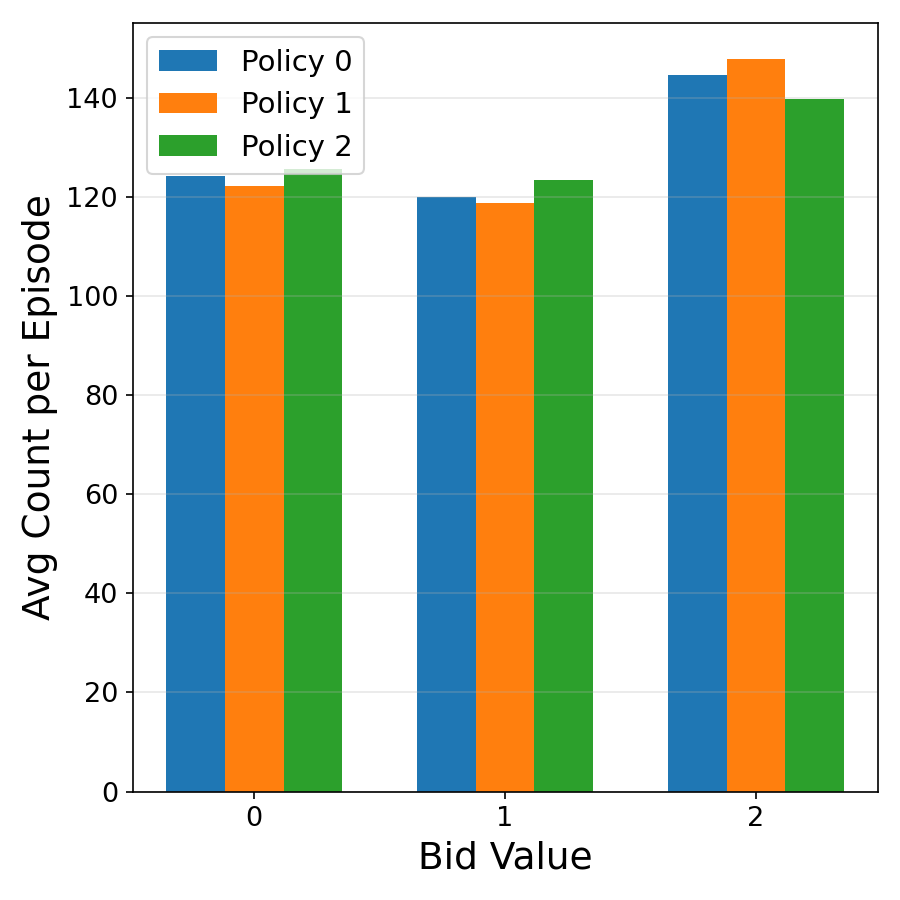}
        \caption{Atari Air-Raid: Bid distribution (\poorman)}
        \label{fig:airraid-bid-dist}
    \end{subfigure}
    \caption{Comparison of bid distributions across environments and methods. Bars show aggregate bid counts for each bid value across all policies.}
    \label{fig:control-bid-dist}
\end{figure}

\begin{figure}
    \centering
    \begin{subfigure}[b]{0.3\textwidth}
        \centering
        \includegraphics[width=\textwidth]{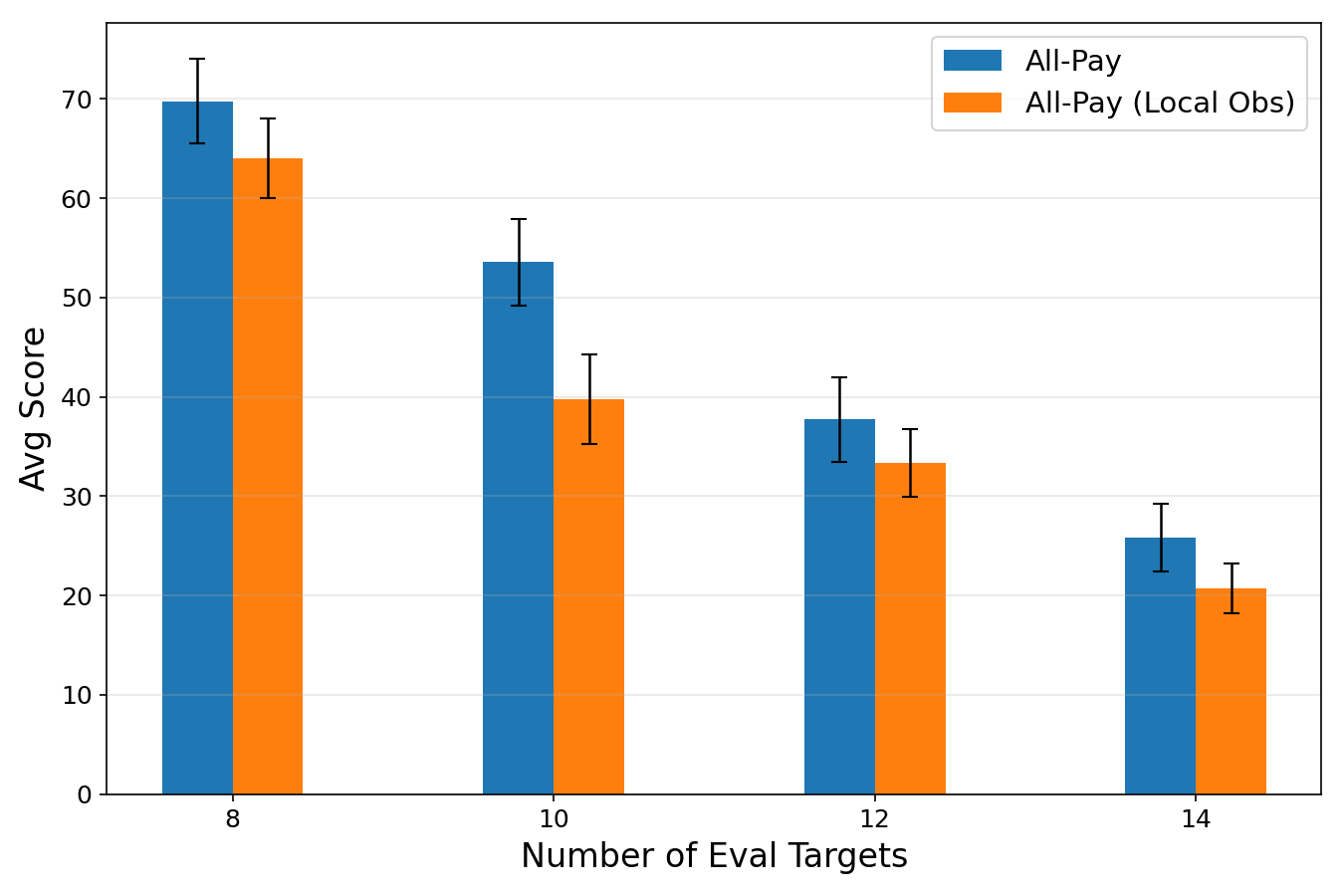}
        \caption{Scaling number of targets.}
        \label{fig:bidding-mechanism-scaling}
    \end{subfigure}
    \hfill
    \begin{subfigure}[b]{0.3\textwidth}
        \centering
        \includegraphics[width=\textwidth]{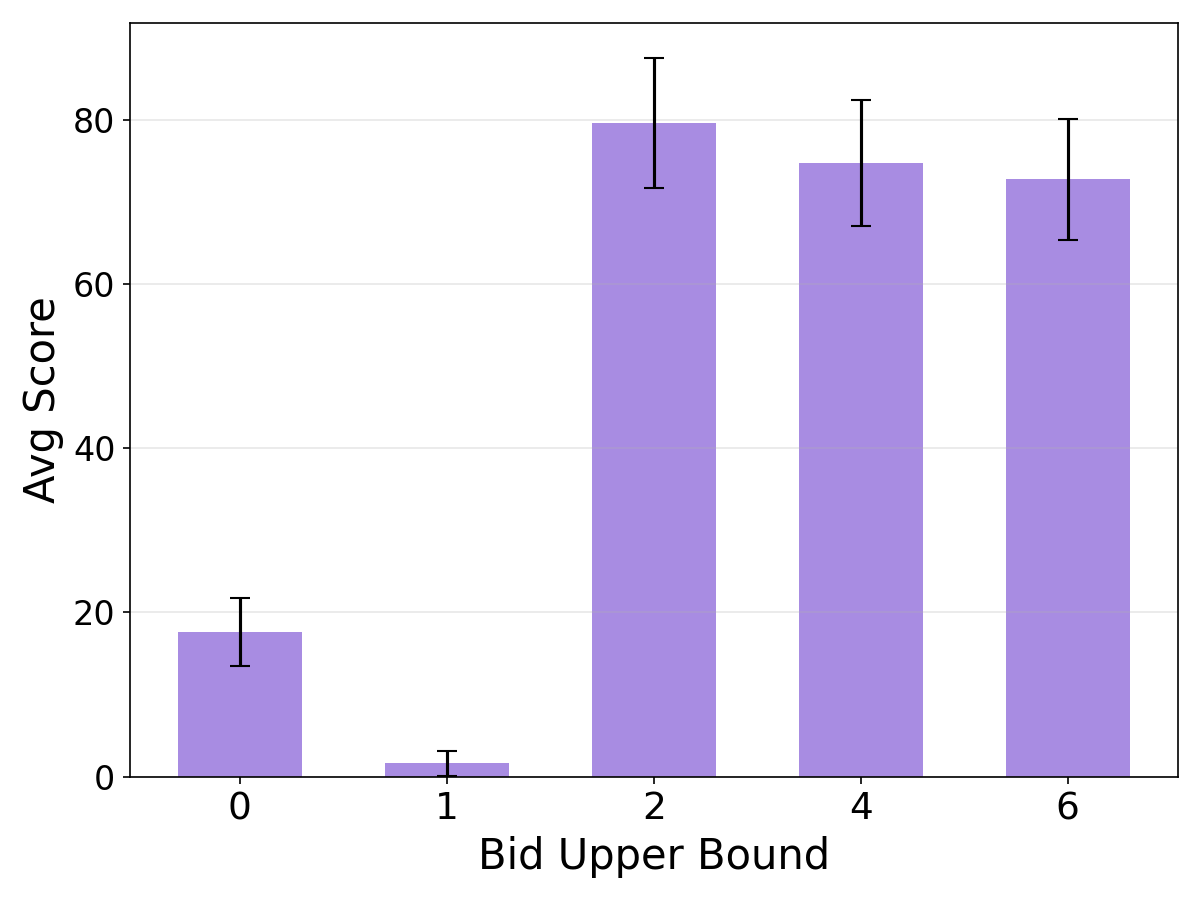}
        \caption{Ablation: Bid upper bound}
        \label{fig:gridworld-bid-upper-bound}
    \end{subfigure}
    \hfill
    \begin{subfigure}[b]{0.3\textwidth}
        \centering
        \includegraphics[width=\textwidth]{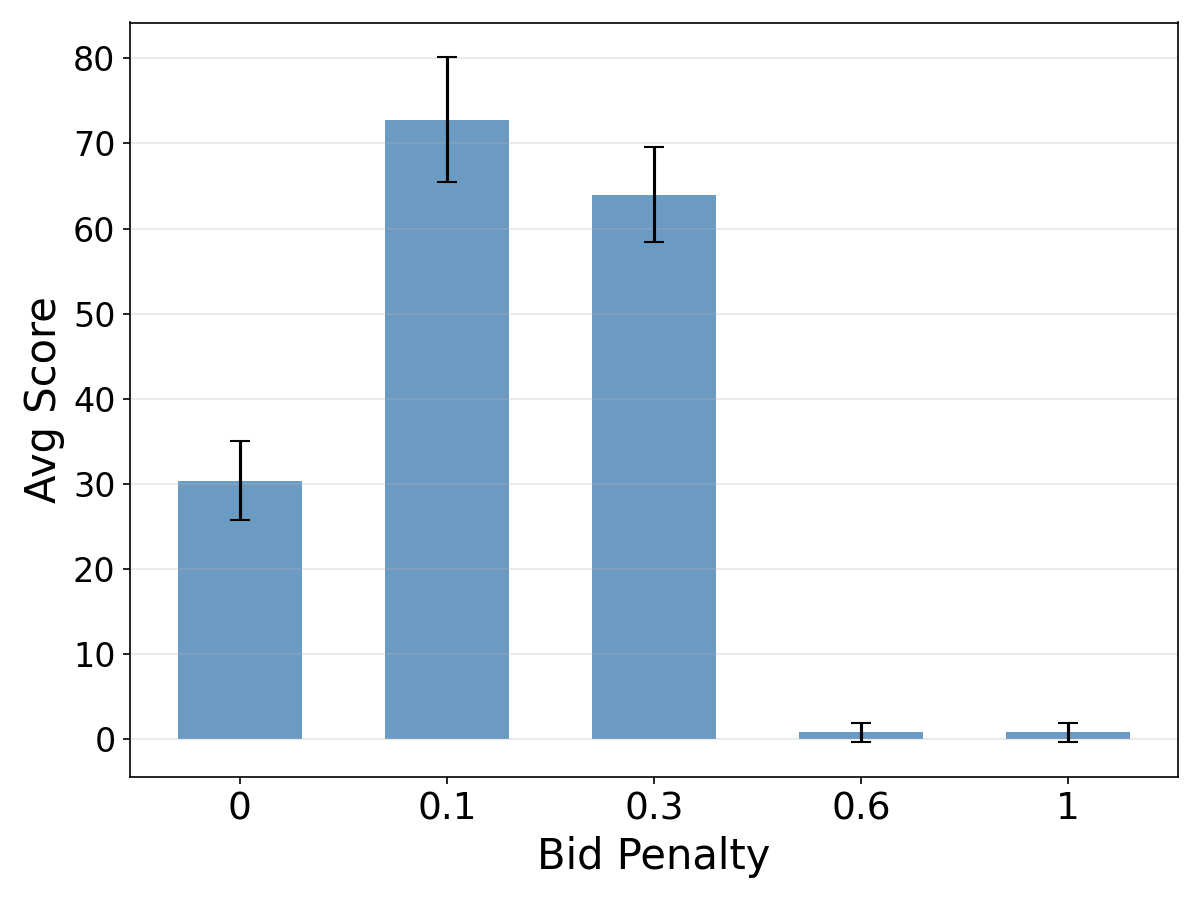}
        \caption{Ablation: Bid penalty}
        \label{fig:gridworld-bid-penalty}
    \end{subfigure}
    \caption{Scaling number of targets and ablations on the Cat Feeder environment.}
    \label{fig:ablations}
\end{figure}

\textbf{Policy behaviors and interpretability.}\quad
Video rollouts show that bidding-based policies take control when they can accumulate reward within the action window, while each policy pursues only its own objective. \Cref{fig:control-bid-dist} presents the bid distributions.
In Assault, bids are usually 0, indicating that random control assignment suffices since targets do not expire. In contrast, the Cat Feeder environment shows a wider bid spread (0--6). Here, \allpay yields more conservative policies with mostly low bids, while \poorman produces a more even distribution, reflecting greater willingness to bid high.
Finally, in Air-raid, we see policies biding the highest value more often, although they also bid lower values.
This is because there is a constant stream of enemy ships that each policy has to destroy and so they want to take control more often to avoid missing out on points.

\textbf{Scaling number of targets.}\quad We conduct an experiment with the Cat Feeder environment in which the number of cats seen during testing is larger than that during training. 
This is achieved by deploying more copies of the same policy that optimize the additional objectives.
\Cref{fig:bidding-mechanism-scaling} shows that the \textbf{policies continue to perform well up to 14 targets even though they were only trained to compete with up to 7 other policies.}
The attention pooling mechanism plays a critical role as it can generate useful embeddings of an arbitrary number of competing targets.

\textbf{Hyperparameter ablations.}\quad \Cref{fig:gridworld-bid-upper-bound} shows that at least two bid levels are needed for good performance; with only one level, policies cannot compete effectively. \Cref{fig:gridworld-bid-penalty} evaluates the bid penalty factor: without penalties all policies overbid, hurting performance, while overly large penalties discourage bidding. Additional action-window ablations are reported in Appendix~\ref{app:additional-experimental-results}.

%% file: tables/performance_table.tex
\begin{table}
  \caption{Scores (mean and standard deviation across 10 seeds) averaged over each seed's last 5 evaluation iterations. Each evaluation is run every 10 PPO iterations using 20 rollouts. In Cat Feeder, score is the number of feeding requests completed minus the number that expired. Atari Assault and Atari Air-raid scores are the game scores.}
  \label{tab:all_methods_comp}
  \centering
  \begin{tabular}{llll}
    \toprule
    {\bf Method} & {\bf Cat Feeder} & {\bf Atari Assault} & {\bf Atari Air-raid} \\
    \midrule
    Winner-Pays & $49.44$ {\small\textcolor{gray}{$(4.19)$}} & $556.05$ {\small\textcolor{gray}{$(29.23)$}} & $\textbf{3902.55}$ {\small\textcolor{gray}{$(90.44)$}} \\
    \textcolor{gray}{\itshape\hspace{1em}w. No Attn. Pool.} & \textcolor{gray}{\itshape $51.72$ {\small$(3.97)$}} & \textcolor{gray}{\itshape ---} & \textcolor{gray}{\itshape ---} \\
    All-Pay & $\textbf{72.52}$ {\small\textcolor{gray}{$(4.23)$}} & $\textbf{576.64}$ {\small\textcolor{gray}{$(43.19)$}} & $\textbf{3870.45}$ {\small\textcolor{gray}{$(76.72)$}} \\
    \textcolor{gray}{\itshape\hspace{1em}w. No Attn. Pool.} & \textcolor{gray}{\itshape $37.72$ {\small$(50.32)$}} & \textcolor{gray}{\itshape ---} & \textcolor{gray}{\itshape ---} \\
    \midrule
    DWN & $\textbf{-10.27}$ {\small\textcolor{gray}{$(2.38)$}} & $310.80$ {\small\textcolor{gray}{$(159.54)$}} & $2438.38$ {\small\textcolor{gray}{$(625.97)$}} \\
    Single-Policy PPO & \textcolor{gray}{\itshape ---} & $\textbf{358.60}$ {\small\textcolor{gray}{$(25.99)$}} & $\textbf{3549.33}$ {\small\textcolor{gray}{$(1170.14)$}} \\
    \textcolor{gray}{\itshape\hspace{1em}w. No Reward Shaping} & \textcolor{gray}{\itshape $-51.59$ {\small$(5.33)$}} & \textcolor{gray}{\itshape ---} & \textcolor{gray}{\itshape ---} \\
    \textcolor{gray}{\itshape\hspace{1em}w. Nearest Shaping} & \textcolor{gray}{\itshape $-14.15$ {\small$(41.88)$}} & \textcolor{gray}{\itshape ---} & \textcolor{gray}{\itshape ---} \\
    \textcolor{gray}{\itshape\hspace{1em}w. Expiry Shaping} & \textcolor{gray}{\itshape $-47.62$ {\small$(1.52)$}} & \textcolor{gray}{\itshape ---} & \textcolor{gray}{\itshape ---} \\
    \bottomrule
  \end{tabular}
\end{table}

%% file: conclusion.tex
\section{Concluding Remarks}

We propose a novel modular framework for solving multi-objective RL tasks, where each objective is pursued by a dedicated local policy, and policies bid for the right to execute their action.
The advantages of the framework over the baselines include: better learning and policy performances, and superior adaptability in the face of appearing and disappearing objectives at runtime.

\textbf{Limitations.} 
Our auction-based mechanism selects policies based on the relative urgency of the most urgent policy compared to the rest.
In doing so, sometimes the framework could sacrifice overall optimality: 
in the cat feeder example, if the most urgent target is isolated in one corner, and the rest are concentrated in the opposite corner, the agent could prioritize the single urgent task over the \emph{majority} of less urgent tasks, even if pursuing the majority would lead to greater overall payoff.

\textbf{Future directions.}
Several directions exist.
First, we will attempt to mitigate the above limitation by extending our auction-mechanism to include ``group urgencies'' into considerations, e.g., by incorporating mechanisms like majority voting.
Second, we assumed that all objectives belong to the same family, which we will relax in the future.
In some cases, a lexicographic ordering of priorities over objective types would be interesting, like safety usually gets higher priority than the rest.
Finally, in our experiments, we observe that the auction mechanism treats all objectives \textit{fairly}, whereas monolithic policies tend to focus on one or few objectives.
If true in general, it could imply that our framework achieves much more than merely maximizing the sum of individual payoffs: we will continue investigating this in future work.

%% file: deterministic-appendix.tex
\section{A Special Case of \momdps: Deterministic Time-Bounded Reachability with Multiple Stationary Goals}
\label{app:deterministic-path-planning}

We now  isolate a deterministic path-planning special case that provides useful intuition for the auction mechanism.
This setting is simple enough that we can even show that our auction-based framework would coincide with one of the standard algorithms (called LSTF, introduced later) for solving this problem (see Sec.~\ref{app:lstf-analysis}).
This shows that, while in the most general \momdp setting we are not able to make theoretical claims on optimality, additional assumptions sometime will bridge this gap.

\subsection{The Model and the Shape of the Optimal Policy}

The special case can be stated directly as follows.
An agent moves without uncertainty in a location space \(L\), starting from \(l_0\in L\).
There are \(m\) stationary goals \(g_1,\ldots,g_m\), where goal \(g_i\) has deadline \(d_i\in\mathbb{N}_{>0}\), reward \(r_i\in\mathbb{R}_{>0}\), and missed-deadline penalty \(-M\).
Let \(d(x,g_i)\) denote the minimum travel time from location \(x\) to \(g_i\).
The agent receives \(r_i\) if it reaches \(g_i\) before \(d_i\), and receives \(-M\) otherwise.

Equivalently, the state is a pair \((x,t)\in L\times\mathbb{N}_{\geq 0}\), where \(x\) is the agent location and \(t\) is time.
Actions are movement commands in \(L\), the transition map is deterministic, the initial state is \((l_0,0)\), and the finite horizon \(H\) is chosen large enough to include the relevant deadlines.
The objective is to maximize the sum of the local rewards over all goals.

\begin{proposition}\label{prop:scheduling is the optimal choice}
  In the deterministic stationary-goal setting, there exists a permutation \(\sigma^*\in\Sigma_m\), where \(\Sigma_m\) is the set of all permutations of \([1;m]\), such that an optimal policy pursues the goals in the order
  \[
    \sigma^*(1),\ldots,\sigma^*(m),
  \]
  and follows shortest paths from \(l_0\) to \(g_{\sigma^*(1)}\) and from \(g_{\sigma^*(q)}\) to \(g_{\sigma^*(q+1)}\) for every \(q\in[1;m-1]\).
\end{proposition}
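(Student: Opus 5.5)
Since the model is deterministic, every policy induces a single trajectory \(x_0=l_0,x_1,\ldots,x_H\), and the global reward depends on this trajectory only through the set of goals reached before their deadlines. For each goal \(g_i\) define the first hitting time \(T_i=\min\{t\mid x_t=g_i\}\), with \(T_i=\infty\) if \(g_i\) is never visited. The payoff is \(\sum_i r_i\mathbf{1}[T_i<d_i]-M\mathbf{1}[T_i\geq d_i]\). This is monotone non-increasing in each \(T_i\): reaching a goal earlier never hurts. So it suffices to show that for any trajectory there is a permutation-plus-shortest-paths trajectory in which no hitting time increases.

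Fix an optimal policy \(\pi\) with trajectory \(x\), which exists because the horizon is finite and the set of deterministic action sequences is finite, or at least values are bounded and attained over deterministic plans. Order the goals by their first hitting times \(T_{\sigma(1)}\leq T_{\sigma(2)}\leq\cdots\). Ties occur when several goals share a location; break them arbitrarily. Goals never reached are appended at the end in any order. Let \(\sigma^*=\sigma\).

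Next, build the shortest-path trajectory \(x'\) that goes from \(l_0\) to \(g_{\sigma^*(1)}\) along a shortest path, then to \(g_{\sigma^*(2)}\), and so on. Let \(T'_{\sigma^*(q)}\) be the time \(x'\) arrives at \(g_{\sigma^*(q)}\) at the end of the \(q\)-th leg. This is an upper bound on its first hitting time in \(x'\). Prove by induction on \(q\) that \(T'_{\sigma^*(q)}\leq T_{\sigma^*(q)}\) for every reached goal:
\begin{itemize}
\item Base case: \(T'_{\sigma^*(1)}=d(l_0,g_{\sigma^*(1)})\leq T_{\sigma^*(1)}\), since any path needs at least the shortest travel time.
\item Inductive step: \(T'_{\sigma^*(q+1)}=T'_{\sigma^*(q)}+d(g_{\sigma^*(q)},g_{\sigma^*(q+1)})\leq T_{\sigma^*(q)}+(T_{\sigma^*(q+1)}-T_{\sigma^*(q)})\). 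This holds because the segment of \(x\) between times \(T_{\sigma^*(q)}\) and \(T_{\sigma^*(q+1)}\) is a path from \(g_{\sigma^*(q)}\) to \(g_{\sigma^*(q+1)}\), so its length is at least the shortest distance.
\end{itemize}
Hence every goal reached by \(x\), in particular every goal met before its deadline, is reached by \(x'\) at least as early. Goals missed by \(x\) incur \(-M\) in \(x\) and at worst \(-M\) in \(x'\). Therefore \(w(x')\geq w(x)\), and \(x'\) is optimal.

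The main obstacles are technical. The first is handling the horizon \(H\) and unreached goals: after its legs are done, or if a leg exceeds \(H\), \(x'\) simply truncates, and the inequality shows the reached goals still fall within \(H\). The second is making sure the reward semantics (reward paid once, at first arrival) match the indicator formulation. One must also note that the shortest-path policy is realizable as a (time-dependent) policy on the state \((x,t)\), since the state includes time; this makes it a legitimate Markov policy.
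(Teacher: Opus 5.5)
Your proposal is correct and follows essentially the same route as the paper. Both fix an optimal trajectory, order the goals by first arrival, and replace each segment between consecutive first-reached goals by a shortest path, which never delays a reached goal. Your one-shot construction with the induction \(T'_{\sigma^*(q)}\le T_{\sigma^*(q)}\) is a more careful version of the paper's repeated shortcutting, and it handles incidental visits, ties, and unreached goals explicitly where the paper leaves them implicit.
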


\begin{proof}
  Fix any deterministic policy and consider the order in which it first reaches goals along one of its induced trajectories.
  Since goals are stationary and rewards depend only on first arrival before the corresponding deadlines, any subpath between two consecutive first-reached goals can be replaced by a shortest path between the same endpoints.
  This replacement cannot delay any reached goal and therefore cannot decrease the total reward.
  Repeating this shortcutting argument yields a policy that visits goals according to some permutation and uses shortest paths between consecutive goals.
  Applying the argument to an optimal policy gives the claimed permutation \(\sigma^*\).
\end{proof}

Thus, in this deterministic special case, policy synthesis reduces to choosing an ordering of the goals.
Computing the globally optimal ordering is a centralized scheduling problem, and it can be expensive even in deterministic metric settings.
This motivates lightweight priority rules that can be implemented by local policies.

A standard deterministic priority rule is \emph{least slack time first} (LSTF)~\citep{brown2020optimal}, which schedules the goal with smallest nonnegative slack
\[
  \slack_i(x,t)\coloneqq d_i-t-d(x,g_i).
\]
Slack is well matched to stationary goals and deterministic travel times, but it does not directly apply to the full \momdp setting with stochastic motion, discounted rewards, missed-deadline penalties, and objectives that may appear or disappear.
For the general setting, our theory instead uses the control gain of each local policy: the marginal continuation-value advantage of granting that policy the next control interval rather than deferring control to another policy.
This reward-aware urgency captures both deadline pressure and whether another policy's action would incidentally help the same objective.

\subsection{The Guarantees Provided by the Auction-Based Framework}
\label{app:lstf-analysis}

We now connect the auction mechanism to least slack time first (LSTF) in the deterministic stationary-goal setting from the previous section.
Recall that each goal \(g_i\) is stationary, the agent moves deterministically, and \(d(x,g_i)\) is the minimum travel time from location \(x\) to goal \(g_i\).
At location \(x\) and time \(t\), define the slack of goal \(i\) by
\[
  \slack_i(x,t)\coloneqq d_i-t-d(x,g_i).
\]
Once \(g_i\) is reached, it disappears from the map and we write \(\slack_i(x,t)=\bot\).
The least slack time first rule schedules a goal with minimum nonnegative slack~\citep{brown2020optimal}.
Formally, at location \(x\) and time \(t\), LSTF selects any goal
\[
  i_{\mathrm{LSTF}}\in
  \arg\min_{i:\slack_i(x,t)\geq 0}
  \slack_i(x,t),
\]
with ties broken arbitrarily.
Thus LSTF gives control to the goal whose deadline is closest after accounting for travel time.

We show that, when one goal is critically urgent and all other goals are safely non-urgent, the auction selects the same goal as LSTF.
At each bidding round, every goal \(i\) is represented by a local policy that proposes an action and submits a nonnegative bid.
The highest bidder controls the agent for the next \(\tau\in\mathbb{N}_{>0}\) time steps, with ties broken uniformly at random.
Bids are bounded by \(\beta\), and \(\rho>0\) is the bid penalty coefficient.
For \poorman, bids are integer values in \([0;\beta]\); for \allpay, bids are real values in \([0,\beta]\).
We analyze two payment rules: in \poorman, only the winning policy pays \(\rho b_i\), while in \allpay, every policy pays \(\rho b_i\).

In the remainder of this appendix, \(s\in L\) denotes the current agent location and \(t\) denotes the current time.
Suppose \(\pi_i\) is the local policy for the \(i\)-th goal, mapping each pair \((s,t)\) to a bid \(b_i\) and an action \(a_i\).
We write \(\pi_i^b\colon (s,t)\mapsto b_i\) and \(\pi_i^a\colon (s,t)\mapsto a_i\) for the bidding and action components.
In this deterministic setting, \(\pi_i^a\) proceeds toward its goal \(g_i\) along a shortest path, and \(T^\tau(s,\pi_i^a)\) denotes the state reached by applying \(\pi_i^a\) for \(\tau\) steps from \(s\).

For a bidding round at time \(t\), define \(V_i^{\win}(s,t,b_i)\) and \(V_i^{\lose}(s,t,b_i)\) as the value to policy \(i\) when it bids \(b_i\) and respectively wins or loses the next bidding interval.
Let \(V_i^{\cont}(s,t)\) denote the Nash equilibrium continuation value of the subgame starting from \((s,t)\).
The value functions are defined by backward induction as
\begin{align*}
  V_i^{\win}(s,t,b_i)
  &\coloneqq
  \gamma^\tau V_i^{\cont}(T^\tau(s,\pi_i^a),t+\tau)-\rho b_i,\\
  V_i^{\lose}(s,t,b_i)
  &\coloneqq
  \gamma^\tau V_i^{\cont}(T^\tau(s,\pi_j^a),t+\tau)-\rho b_i\cdot \mathbf{1}[\text{\allpay}],
\end{align*}
where \(j\) is the policy that wins control when \(i\) loses.
The net gain from winning rather than losing is
\begin{align*}
  \Delta_i^{\mathrm{net}}(s,t,b_i)
  &\coloneqq
  V_i^{\win}(s,t,b_i)-V_i^{\lose}(s,t,b_i)\\
  &=
  \Delta_i^{\mathrm{gross}}(s,t)-\rho b_i\cdot \mathbf{1}[\text{\poorman}],
\end{align*}
where the gross gain
\[
  \Delta_i^{\mathrm{gross}}(s,t)
  \coloneqq
  \gamma^\tau\left[
    V_i^{\cont}(T^\tau(s,\pi_i^a),t+\tau)
    -
    V_i^{\cont}(T^\tau(s,\pi_j^a),t+\tau)
  \right]
\]
measures the value advantage to objective \(i\) of winning the next bidding interval.

\begin{lemma}\label{lem:app-lstf-critical-regime-gap}
  In both \poorman and \allpay settings, if policy \(i^*\) is in the critical regime, i.e., \(\slack_{i^*}(s,t)\in[0,\tau]\), and every other policy \(j\neq i^*\) is in the safe regime, i.e., \(\slack_j(s,t)>\tau\), then
  \begin{align}\label{eq:app-lstf-gross-gain-urgency}
    \Delta_{i^*}^{\mathrm{gross}}(s,t)
    -
    \max_{j\neq i^*}\Delta_j^{\mathrm{gross}}(s,t)
    > M-r_{\max}>0.
  \end{align}
\end{lemma}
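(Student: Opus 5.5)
We bound the two sides of the gap separately: a lower bound on \(\Delta_{i^*}^{\mathrm{gross}}(s,t)\) from the fact that goal \(i^*\) is lost exactly when its policy loses the round, and an upper bound on \(\Delta_j^{\mathrm{gross}}(s,t)\) for every safe \(j\) from the fact that goal \(j\) survives any single interval of deviation. Throughout we write \(r_{\max}\coloneqq\max_k r_k\). We work under the implicit normalization that \(M>r_{\max}\) and that discounting over one interval is negligible (or \(\gamma=1\)), so that \(\gamma^\tau\) factors do not affect the comparison. If \(\gamma<1\) matters, the same argument goes through with \(M,r_{\max}\) multiplied by the appropriate \(\gamma\)-powers, and the constants must be adjusted accordingly.

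\textbf{Step 1: the critical goal.} Since \(\slack_{i^*}(s,t)\in[0,\tau]\), policy \(i^*\) can reach \(g_{i^*}\) in time only by moving along a shortest path during the whole next interval. Following \(\pi_{i^*}^a\) for \(\tau\) steps keeps its slack nonnegative, because each step toward \(g_{i^*}\) lowers \(d(x,g_{i^*})\) by one as \(t\) grows by one; if the goal is reached within the interval, its reward \(r_{i^*}\) is collected. Hence the winning continuation for objective \(i^*\) secures \(r_{i^*}\), and we take the continuation values to be computed with \(i^*\) completing on time under the equilibrium.

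If \(i^*\) loses, some \(j\neq i^*\) moves along a shortest path to \(g_j\ne g_{i^*}\). We claim this makes \(\slack_{i^*}\) negative by time \(t+\tau\), so objective \(i^*\) receives \(-M\). This needs an argument: a shortest path toward \(g_j\) might coincidentally also be a shortest path toward \(g_{i^*}\). I would handle this by an assumption that the deviating path is strictly non-progressive for \(i^*\), or by restricting to the worst case, which matches the \(\min_{j\neq i^*}\) in Eq.~\eqref{eq:losing continuation value}. Granting the claim, \(\Delta_{i^*}^{\mathrm{gross}}\ge r_{i^*}-(-M)=r_{i^*}+M\).

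\textbf{Step 2: safe goals.} For \(j\neq i^*\), \(\slack_j>\tau\). Whoever controls the next \(\tau\) steps changes \(d(x,g_j)\) by at most \(\tau\), while time also advances by \(\tau\). So after the interval \(\slack_j\) remains nonnegative regardless of the controller, and goal \(j\) is still attainable from either successor state. The difference between the two continuation values for objective \(j\) is then driven only by the equilibrium play afterwards.

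The key observation is that in the branch where \(j\) does not win, \(i^*\) wins and collects its goal, which is exactly the scenario also available later. Objective \(j\)'s own payoff lies in \(\{r_j,-M\}\), and the only way winning now could gain \(j\) more than \(r_{\max}\) is if losing now caused \(j\) to miss its deadline. I would show \(j\) can still be completed as well as in the winning branch, using the extra slack \(>\tau\) and the subsequent Nash play, giving \(\Delta_j^{\mathrm{gross}}\le r_{\max}\).

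\textbf{Main obstacle.} This last inequality is the hard step, because continuation values under Nash play could in principle differ arbitrarily between the two successor states. I would first try an induction on the remaining horizon, comparing subgames at equal slack. If that fails, I would fall back on a coarse bound: the gross gain counts at most one reward swing for \(j\), so \(\Delta_j^{\mathrm{gross}}\le r_{\max}\) provided \(j\) is not doomed in the losing branch. Combining the two steps gives
\[
\Delta_{i^*}^{\mathrm{gross}}-\max_{j\neq i^*}\Delta_j^{\mathrm{gross}}\ge r_{i^*}+M-r_{\max}>M-r_{\max}>0,
\]
where the last inequality uses \(M>r_{\max}\).
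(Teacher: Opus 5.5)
\textbf{Genuine gap: the Step~2 survival claim is false, and so is the lemma as stated.}

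Your decomposition matches the paper's proof. You lower-bound \(\Delta_{i^*}^{\mathrm{gross}}\) via ``losing yields \(-M\) in the worst case, winning does not,'' and you upper-bound \(\Delta_j^{\mathrm{gross}}\le r_{\max}\) via ``a safe goal is not doomed by losing one interval.'' Where you write ``granting the claim'' and ``provided \(j\) is not doomed,'' the paper also simply asserts the fact. Your \(r_{i^*}+M\) in place of \(M\), and strictness from \(r_{i^*}>0\) rather than \(\gamma^{\tau+1}<1\), are cosmetic differences.

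The problem is Step~2, and your arithmetic there is wrong. While another policy controls for \(\tau\) steps, \(t\) grows by \(\tau\) \emph{and} \(d(x,g_j)\) can also grow by up to \(\tau\). So \(\slack_j(x',t+\tau)\ge\slack_j(x,t)-2\tau\), and \(\slack_j>\tau\) only gives \(\slack_j(x',t+\tau)>-\tau\), not nonnegativity. The paper's sentence that \(\slack_j(s,t)>\tau\) ``implies that losing the next bidding interval does not by itself force \(j\) to miss its deadline'' has exactly the same defect.

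This breaks the lemma itself, not just the proof. Take the following instance:
\begin{itemize}
\item \(L=\mathbb{Z}\), \(\tau=5\), \(\gamma=1\), agent at \(0\) at time \(0\);
\item \(g_{i^*}=10\) with \(d_{i^*}=12\), so slack \(2\) (critical);
\item \(g_j=-10\) with \(d_j=16\), so slack \(6\) (safe);
\item \(r_{i^*}=r_j=1\) and \(M=10\).
\end{itemize}
If \(i^*\) controls the interval, the agent ends at \(5\) with \(\slack_j=-4\). If \(j\) controls, it ends at \(-5\) with \(\slack_{i^*}=-8\). In each successor subgame the doomed policy bids \(0\). In the pure \poorman equilibrium the survivor bids \(1\), wins, and arrives by time \(10\). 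Hence \(\Delta_{i^*}^{\mathrm{gross}}=1-\rho+10=\Delta_j^{\mathrm{gross}}\), so the gap is \(0\), not \(>M-r_{\max}=9\).

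So no completion of Step~2 can work under \(\slack_j>\tau\). You need at least \(\slack_j\ge 2\tau\) for one-interval survival. Even then, two steps still need an argument or an explicit assumption:
\begin{itemize}
\item The step you flagged as the main obstacle: that \(j\)'s losing-branch continuation under later equilibrium play avoids \(-M\).
\item Your Step~1 claim that \(i^*\) collects \(r_{i^*}\) after winning. If \(i^*\) does not arrive within the interval, its slack is unchanged afterward, so it must keep winning later rounds.
\end{itemize}
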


\begin{proof}
  Since \(\slack_{i^*}(s,t)\leq \tau\), losing the bidding interval causes \(i^*\) to miss its deadline unless the winning policy reaches \(g_{i^*}\) incidentally during that interval.
  In the worst case represented by the losing continuation value, this incidental help does not occur.
  Hence \(V_{i^*}^{\lose}(s,t,b_{i^*})=-M-\rho b_{i^*}\cdot\mathbf{1}[\text{\allpay}]\).
  On the other hand, if \(i^*\) wins the bidding, then \(V_{i^*}^{\win}(s,t,b_{i^*})\geq -\rho b_{i^*}\), with strict inequality if the goal is reached during the next \(\tau\) steps.
  Therefore
  \[
    \Delta_{i^*}^{\mathrm{gross}}(s,t)\geq M.
  \]

  For any \(j\neq i^*\), the safe-regime condition \(\slack_j(s,t)>\tau\) implies that losing the next bidding interval does not by itself force \(j\) to miss its deadline.
  Thus \(V_j^{\win}(s,t,b_j)\leq \gamma^{\tau+1}r_j-\rho b_j\) and \(V_j^{\lose}(s,t,b_j)\geq-\rho b_j\cdot\mathbf{1}[\text{\allpay}]\), so
  \[
    \Delta_j^{\mathrm{gross}}(s,t)
    \leq
    \gamma^{\tau+1}r_j
    < r_j
    \leq r_{\max}.
  \]
  Combining the two bounds gives
  \[
    \Delta_{i^*}^{\mathrm{gross}}(s,t)-\max_{j\neq i^*}\Delta_j^{\mathrm{gross}}(s,t)
    > M-r_{\max}.
  \]
  Since \(M>r_{\max}\), the right-hand side is positive.
\end{proof}

Lemma~\ref{lem:app-lstf-critical-regime-gap} is the only special-case ingredient needed to relate the auction mechanism to LSTF.
It says that a critically urgent goal has a strictly larger gross gain than every safe goal.
Therefore, in the \poorman setting, if \(0<\rho<M-r_{\max}\), the margin condition in Theorem~\ref{thm:poorman-favors-high-control-gain} is satisfied and the critically urgent LSTF goal wins in any pure Nash equilibrium.
In the \allpay setting, the same gross-gain ordering is the input to Theorem~\ref{thm:allpay-favors-high-control-gain}; hence the mixed equilibrium concentrates control on the critically urgent LSTF goal, with concentration increasing as the urgent goal's gross gain dominates the runner-up's gross gain.

%% file: poorman-proof-appendix.tex
\section{Proof of the \poorman Control-Gain Result}
\label{app:poorman-control-gain-proof}

\begin{proof}[Proof of Theorem~\ref{thm:poorman-favors-high-control-gain}]
	Suppose for contradiction, some other policy $j\neq i^*$ wins control by bidding $b_j$, while policy $i^*$ bids $b_{i^*}<b_j$. 
	As this is a Nash equilibrium, it must hold that 
	$U_{j,t}^{\mathsf{P}}(s,b_j) > \widetilde V^{\lose}_{j,t}(s)$, i.e., $G_{j,t}(s) - \rho b_j > 0$,
	because otherwise, policy $j$ would be better off by losing the bidding instead.
	This gives us the upper bound $\rho b_j < G_{j,t}(s)$.
	It follows from $\rho\beta > \max_{k\in [1;m]}G_{k,t}(s)$ that $\rho b_j < \rho\beta$, and hence $b_j<\beta$, i.e., the policy $j$ does not make the maximum bid.
	
	Now we assumed that policy $i^*$ had bid $b_{i^*}<b_j$, but we show that policy $i^*$ would achieve a higher one-round bidding payoff by bidding $b_j+1$ (valid, since $b_j<\beta\implies b_j+1\leq \beta$), contradicting the Nash equilibrium:	 
$	U_{i^*,t}^{\mathsf{P}}(s,b_j+1) 
	= \widetilde V^{\lose}_{i^*,t}(s)+G_{i^*,t}(s) - \rho(b_j+1) 
	= \widetilde V^{\lose}_{i^*,t}(s)+ G_{i^*,t}(s) - \rho b_j -\rho  
	> \widetilde V^{\lose}_{i^*,t}(s)+ G_{i^*,t}(s) - G_{j,t}(s) -\rho
	> \widetilde V^{\lose}_{i^*,t}(s) = U_{i^*,t}^{\mathsf{P}}(s,b_j)
$.
\end{proof}

%% file: allpay-proof-appendix.tex
\section{Proof of the \allpay Control-Gain Result}
\label{app:allpay-control-gain-proof}

\begin{proof}[Proof of Theorem~\ref{thm:allpay-favors-high-control-gain}]
	It follows from the work of Baye et al.~\cite{baye1996all} that in the Nash equilibrium, the optimal bid of the policies are as follows.
	The policy $i^*$ picks a uniformly random bid $b_i^*$ from $[0,G_{j^*,t}/\rho]$, the policy $j^*$ picks a uniformly random bid $b_j^*$ from $(0,G_{j^*,t}/\rho]$ while picking the bid $0$ with probability $(G_{i^*,t} - G_{j^*,t})/G_{i^*,t}$, and every other policy $k\notin\{i^*,j^*\}$ deterministically picks the bid $0$.
	Clearly, if $b_i^*>b_j^*$, then $i^*$ wins control, and we need to show that $P[b_i^* > b_j^*]  = 1- \sfrac{G_{j^*,t}}{2G_{i^*,t}}$.
	To show this, consider the probability density function $f_{i^*}(b) = \sfrac{\rho }{G_{j^*,t}}$ of $b_{i^*}$, and the cumulative distribution function $F_{j^*}(b) = (G_{i^*,t} - G_{j^*,t})/G_{i^*,t} + \sfrac{\rho b}{G_{i^*,t}}$ of $b_{j^*}$.
	Then,
	\begin{align*}
		P[b_i^* > b_j^*] 
		&= \int_{-\infty}^\infty F_{j^*}(b)f_{i^*}(b)db\\
		&= \int_{0}^{G_{j^*,t}/\rho} \left(\frac{G_{i^*,t} - G_{j^*,t}}{G_{i^*,t}} + \frac{\rho b}{G_{i^*,t}}\right)\cdot \frac{\rho }{G_{j^*,t}} \cdot db \\
		&= \int_{0}^{G_{j^*,t}/\rho} \frac{G_{i^*,t} - G_{j^*,t}}{G_{i^*,t}} \cdot \frac{\rho }{G_{j^*,t}} \cdot db
		+ \int_{0}^{G_{j^*,t}/\rho} \frac{\rho b}{G_{i^*,t}}\frac{\rho }{G_{j^*,t}} \cdot db\\
		&=  \frac{G_{i^*,t} - G_{j^*,t}}{G_{i^*,t}} \cdot \frac{\rho }{G_{j^*,t}} \cdot \frac{G_{j^*,t}}{\rho}
		+ \frac{\rho^2}{G_{i^*,t}G_{j^*,t}}\cdot \frac{\left(G_{j^*,t}\right)^2}{2\rho^2}\\
		&= \frac{G_{i^*,t} - G_{j^*,t}}{G_{i^*,t}} + \frac{G_{j^*,t}}{2G_{i^*,t}}\\
		&= 1-\frac{G_{j^*,t}}{G_{i^*,t}} + \frac{G_{j^*,t}}{2G_{i^*,t}}\\
		&= 1-\frac{G_{j^*,t}}{2G_{i^*,t}}.
	\end{align*}
\end{proof}

%% file: envs-appendix.tex
\section{Experimental Setup (Detailed)}
\label{app:experimental setup}

We consider three multi-objective environments for our empirical studies, namely a mobile cat feeder and the Atari games Assault and Air-Raid.
These environments are described below.

\textbf{Environment I: Cat Feeder.} 
The environment is as described in Section~\ref{sec:introduction} and is modeled using a $30\times 30$ grid with up to $m$ cats that are moving continuously according to unknown stochastic processes.
The value of $m$ was set as $8$ during the training time, whereas it was varied up to $14$ during the testing time.
Each cat disappears after a given number of time steps.
If the agent reaches a cat before it disappears, it receives a reward, and otherwise it receives a penalty of equal magnitude.
To avoid sparsity of the reward function, we provide each policy a small reward for going closer to its target.
Performance is measured as the difference between the number of feeding requests completed and the number of those that expired.
The Cat Feeder environment parameters are listed in~\Cref{tab:cat-feeder-env-params}, and the corresponding multi-policy, single-policy PPO, and DWN training parameters are listed in~\Cref{tab:cat-feeder-mappo-hparams,tab:cat-feeder-ppo-hparams,tab:cat-feeder-dwn-hparams}.

In our modular framework, each cat will be chased by a dedicated local policy.
The observation space of each local policy includes the current position of the robot, the current locations of the active requests, and the time steps until the expiration of the requests.
For local policy $i$, the Cat Feeder reward is
\[
    r_i =
    r_i^{\mathrm{bid}}
    + r_i^{\mathrm{window}}
    + r_i^{\mathrm{dist}}
    + r_i^{\mathrm{reach}}
    + r_i^{\mathrm{expire}} .
\]
Here $r_i^{\mathrm{bid}}$ is the auction payment term, $r_i^{\mathrm{window}}$ is the action-window penalty, $r_i^{\mathrm{dist}}$ rewards distance improvement toward policy $i$'s target, $r_i^{\mathrm{reach}}$ is paid when that target is reached, and $r_i^{\mathrm{expire}}$ penalizes target expiration.
The multi-policy decomposition makes this per-objective shaping possible, since each local policy has a specific target whose distance can be measured directly; in the single-policy setting, there is no fixed policy--objective assignment, so analogous shaping must use a heuristic choice of target.
The single-policy PPO baseline uses the distance-improvement, target-reached, and target-expiration terms, without bid or action-window penalties.

\textbf{Environment II: Atari Assault.} We consider the Atari game called Assault, where the agent controls a missile launcher to shoot oncoming alien ships while avoiding enemy missiles. The standard approach is to view this as a single-objective environment, where the agent's policy observes the entire image or the 128 byte raw RAM state observations to determine the optimal actions \citep{bellemare13arcade}. 

\begin{figure}[t]
    \centering
    \includegraphics[width=0.3\linewidth]{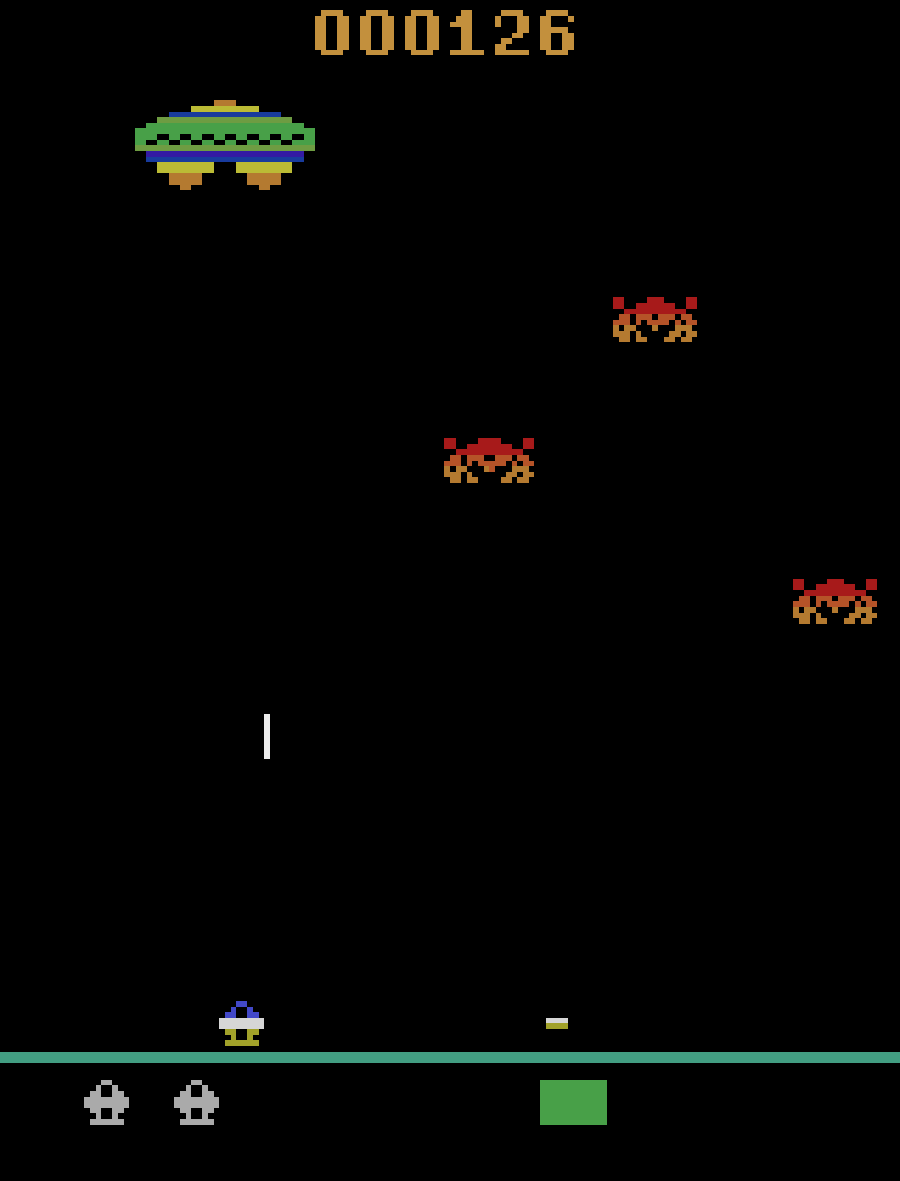}
    \caption{Atari Assault environment.}
    \label{fig:assault_preview}
\end{figure}

For our modular framework, we break the objective down to a multi-objective problem, where each alien ship is viewed as a separate objective, and each local policy attempts to destroy its assigned target while avoiding enemy missiles. 
Not only does this facilitate an improvement in the performance, as we will demonstrate, but also we obtain a transparent design, because which target is being chased is revealed at each point. 
This modular design is made possible by the object-centric variant of the Atari environment \citep{delfosse2023ocatari}, which exposes the states of each individual target and the missiles by decoding the RAM states. 
The full observation space of each local policy also include the current position of the missile launcher, the current health, and the number of remaining lives. 
The local policy gets a reward or a penalty upon destroying its target enemy ship or losing a life, respectively. 
For local policy $i$, the Assault reward is
\[
    r_i =
    r_i^{\mathrm{destroy}}
    + r_i^{\mathrm{life}}
    + r_i^{\mathrm{overheat}}
    + r_i^{\mathrm{hot}}
    + r_i^{\mathrm{bid}}
    + r_i^{\mathrm{window}} .
\]
The destroy reward is credited to the policy assigned to the destroyed enemy, while life-loss, overheat, and fire-while-hot penalties discourage unsafe firing behavior.
The bid and action-window terms account for the auction mechanism.
Lastly, we use the game score as the metric for evaluation.
The Assault environment parameters are listed in~\Cref{tab:assault-env-params}, and the corresponding multi-policy, single-policy PPO, and DWN training parameters are listed in~\Cref{tab:assault-mappo-hparams,tab:assault-ppo-hparams,tab:assault-dwn-hparams}.

\textbf{Environment III: Atari Air-Raid.} Air-Raid uses the same modular Atari setup: each incoming enemy is treated as an objective and assigned to a local policy.
Positive score changes are interpreted as enemy kills, and each kill yields a destroy reward proportional to the score increase.
We also penalize near misses from enemy missiles: if an enemy missile enters a fixed box around the player without causing a life loss, the controlling policy receives a near-hit penalty.
For local policy $i$, the Air-Raid reward is
\[
    r_i =
    r_i^{\mathrm{destroy}}
    + r_i^{\mathrm{near\text{-}hit}}
    + r_i^{\mathrm{bid}} .
\]
Destroy rewards are credited to the policy associated with the hit enemy when that policy owns the player missile that caused the score event, near-hit penalties are assigned to the policy currently controlling the player, and bid penalties are applied according to the auction rule.
The single-policy PPO baseline uses the destroy and near-hit terms without bid penalties.
Additional training and implementation details are reported in~\Cref{sec:reprod-details}.

\textbf{Baseline approaches.}
We compare our method against two baselines: a single policy PPO with weighted rewards and deep W-learning (DWN; \citep{rosero2024multi}). 
For the PPO baseline, since all objectives have the same importance across the environments, we use the total reward across objectives at each step as the reward function.
We note that in Atari Assault and Air-raid, the single-policy PPO reward includes enemy-destroy rewards, raw score rewards, life-loss penalties, overheat penalties, and fire-while-hot penalties, but no bid or action-window terms.
The observation space remains the same as the multi-policy setting. 
In the Cat Feeder environment, we implement three reward mechanisms to help guide learning: one only with the sparse rewards from the environment, one with distance-based dense reward with respect to the nearest feeding request (denoted Nearest Shaping/NS in the experiments), and finally one where we provide distance-based reward with respect to the request that has the least amount of time until expiration (denoted Expiry Shaping/ES).
The single-policy PPO baseline does not use an action window: because there is no policy-switching decision, it selects a new action at every environment step. The action window \(\tau\) applies only to multi-policy methods, where it determines how long the selected local policy retains control after a bidding or W-network selection event.
The PPO implementation is taken from CleanRL~\citep{huang2022cleanrl}.
We use the specified PPO hyperparameters from CleanRL for the Atari games and we tune the hyperparameters for the Cat Feeder environment with 40 Optuna iterations~\citep{akiba2019optuna}.

DWN is another multi-policy method for multi-objective RL that uses DQN for each of the local policies and additionally learns a W-network for each policy that predict scalar weights. 
These weights are used to choose which policy's action should be played at each step. 
For temporal consistency among multi-policy methods, DWN uses the same action window \(\tau\) as our bidding methods.
The W-network is trained alongside the Q-network by updating it with a gradient step to predict the negative temporal difference (TD) error when the policy does not get to play its action.
The assertion here is that the negative TD-error quantifies how important it is for the policy to play its action. 
The implementation is borrowed from the repository of \citep{rosero2024multi}. All hyperparameters for the baselines are listed in~\Cref{sec:reprod-details}.

%% file: additional-experimental-results.tex
\section{Additional Experimental Results}
\label{app:additional-experimental-results}

\begin{figure}[h]
    \centering
    \begin{subfigure}[t]{0.31\textwidth}
        \centering
        \includegraphics[width=\textwidth]{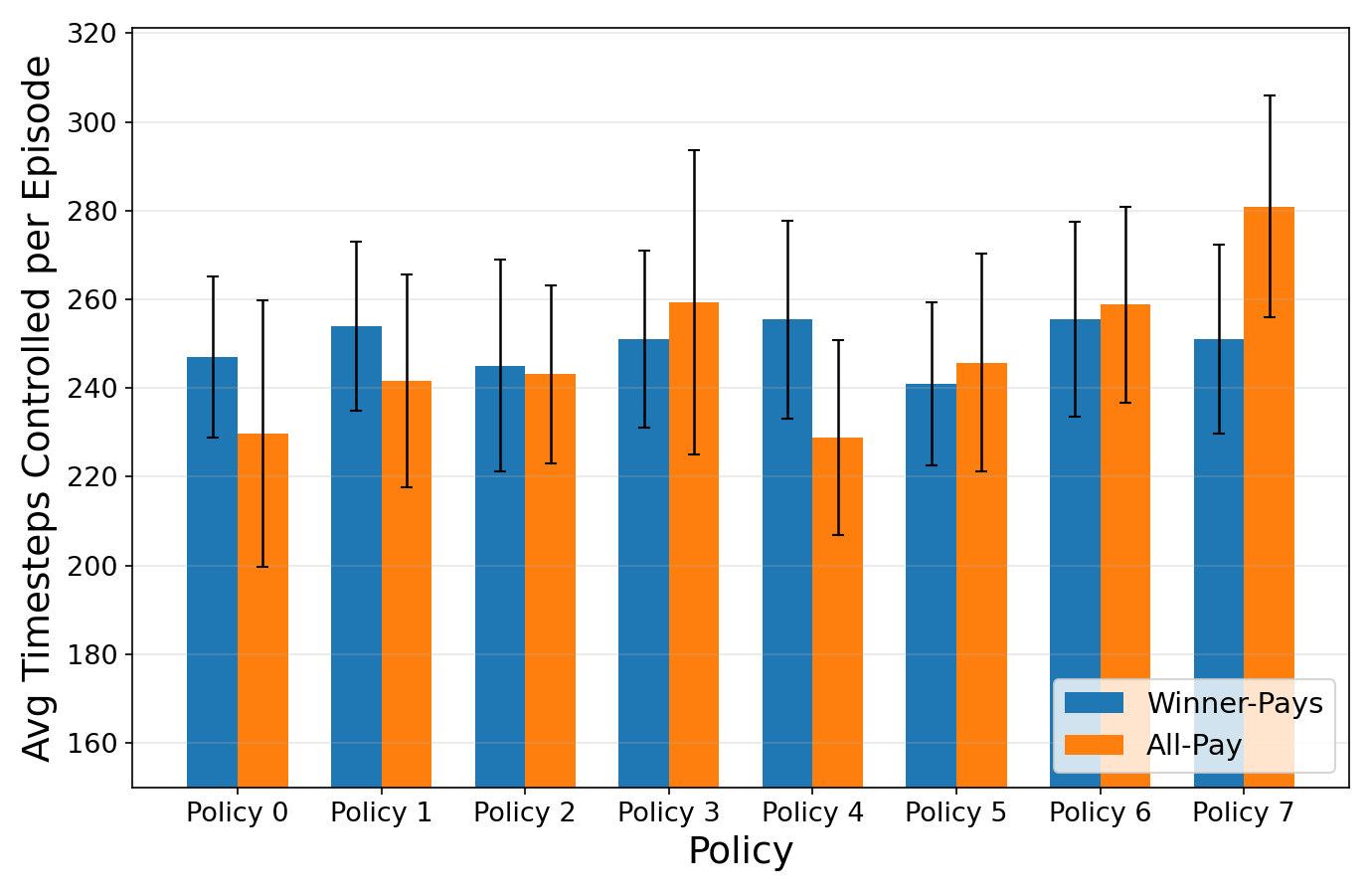}
        \caption{Cat Feeder.}
        \label{fig:dyntar-control-dist}
    \end{subfigure}
    \hfill
    \begin{subfigure}[t]{0.25\textwidth}
        \centering
        \includegraphics[width=\textwidth]{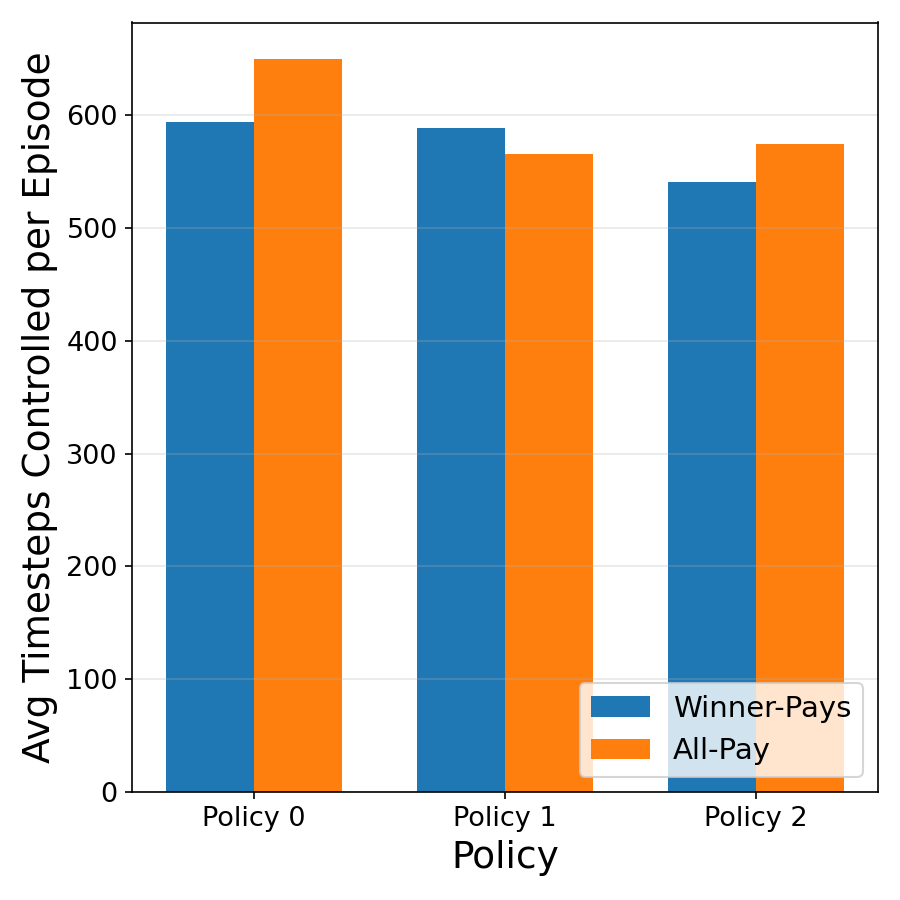}
        \caption{Atari Assault.}
        \label{fig:assault-control-dist}
    \end{subfigure}
    \hfill
    \begin{subfigure}[t]{0.25\textwidth}
        \centering
        \includegraphics[width=\textwidth]{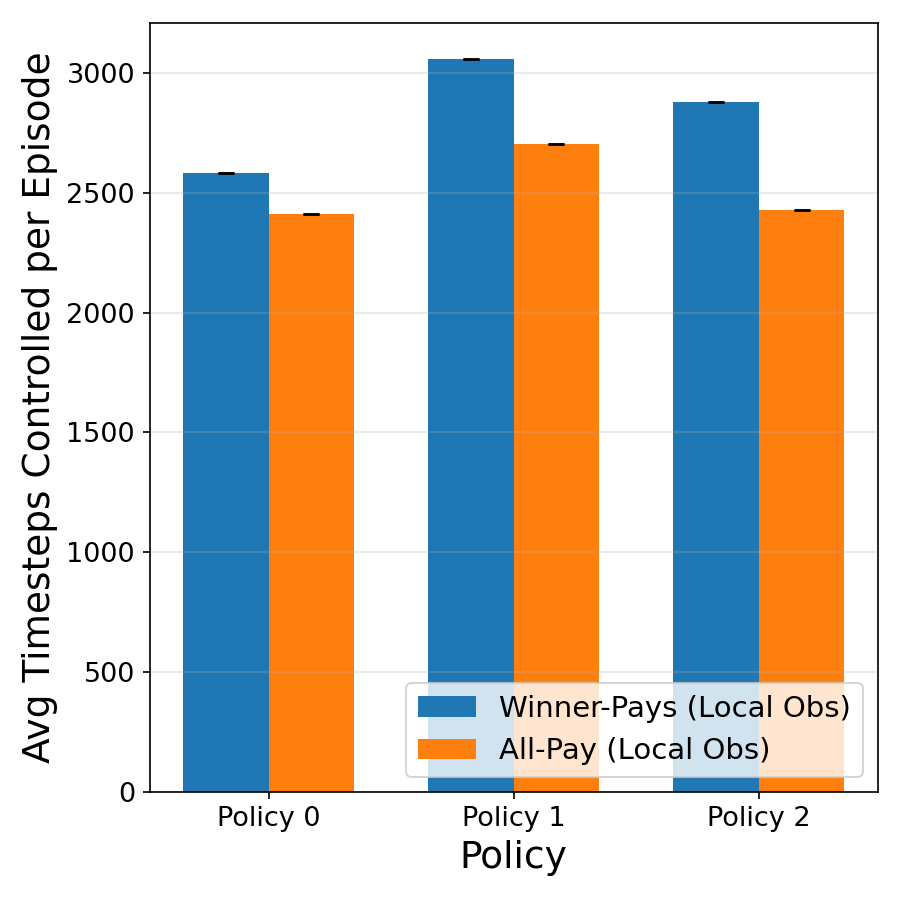}
        \caption{Atari Air-Raid.}
        \label{fig:airraid-control-dist}
    \end{subfigure}
    \caption{Control distributions across environments and methods. The control distribution is the number of timesteps controlled by each policy.}
    \label{fig:control-dist-appendix}
\end{figure}

\begin{figure}[h]
    \centering
    \begin{subfigure}[b]{0.3\textwidth}
        \centering
        \includegraphics[width=\textwidth]{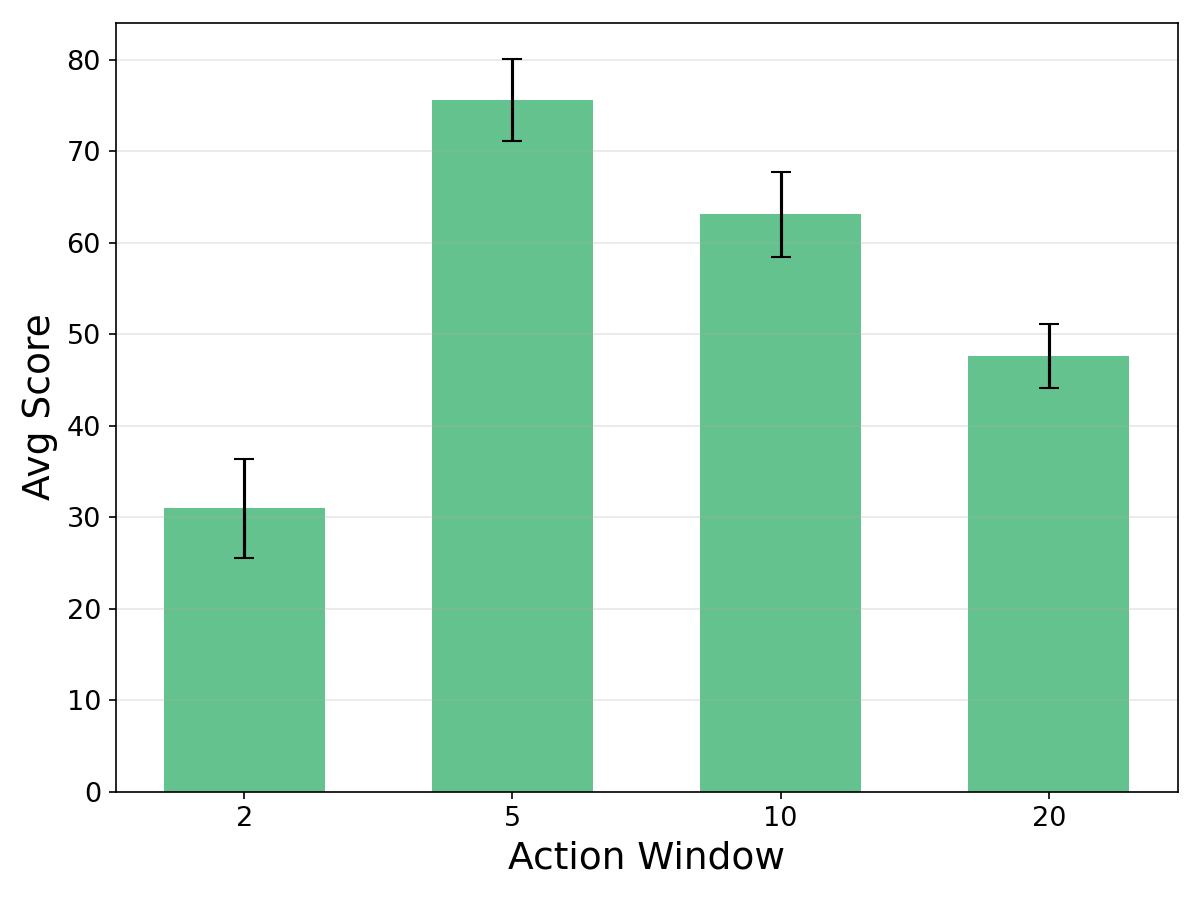}
        \caption{Cat Feeder.}
    \end{subfigure}
    \begin{subfigure}[b]{0.3\textwidth}
        \centering
        \includegraphics[width=\textwidth]{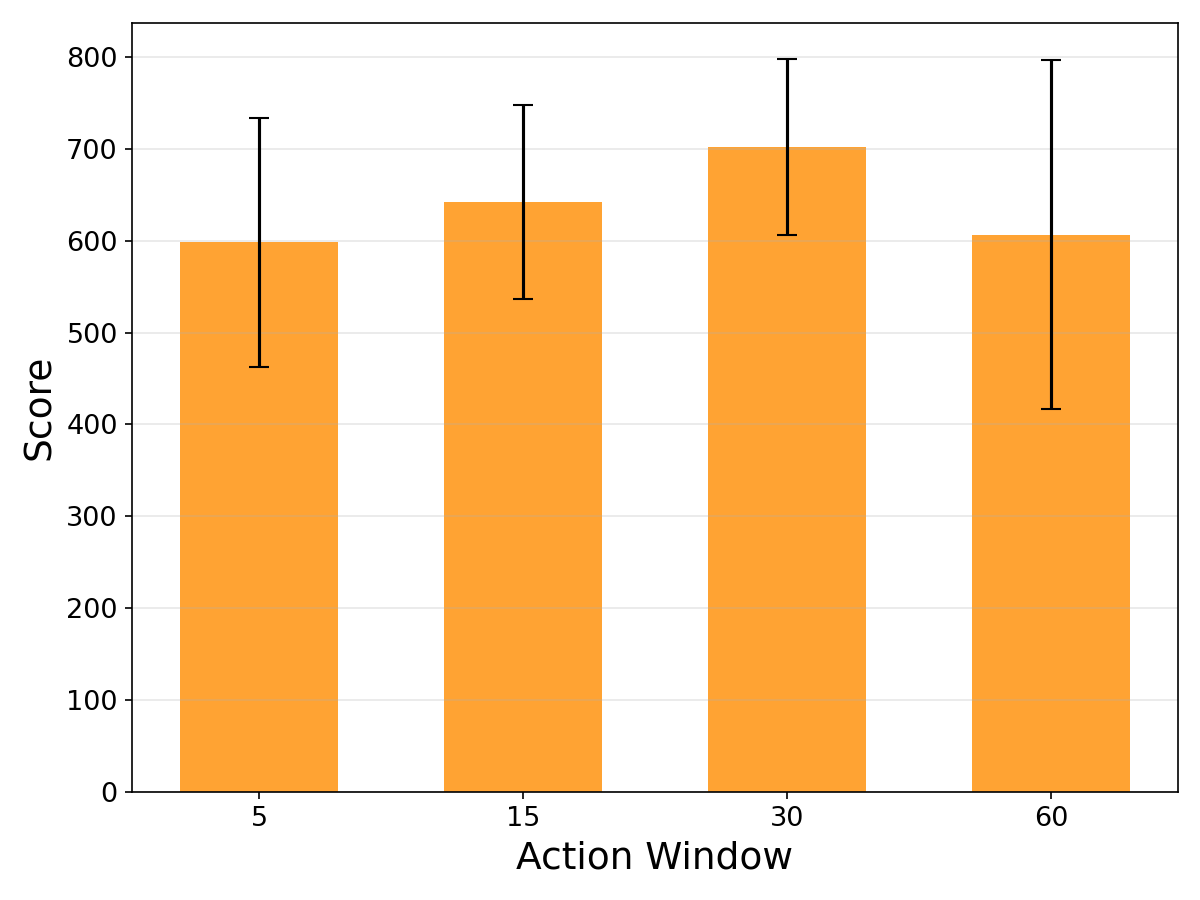}
        \caption{Atari Assault.}
    \end{subfigure}
    \caption{Effect of the action-window size on performance.}
    \label{fig:action-window}
\end{figure}

\Cref{fig:action-window} studies the action-window size in the Cat Feeder and Atari Assault environments.
Smaller windows cause excessive switching between policies, while larger windows prevent sufficient switching to serve all objectives.
The best performance occurs at an intermediate action-window size.

%% file: implementation-details-appendix.tex
\section{Additional Implementation Details}
\label{app:additional-implementation-details}

\subsection{Parameter Sharing}
\label{app:parameter-sharing}

In our experiments, the objectives are symmetric: each local policy solves the same type of reachability task, but for a different assigned target. We therefore share the actor-critic network parameters across local policies. This reduces the number of learned parameters and lets the shared network receive PPO updates from all local-policy objectives.

Parameter sharing does not make the local policies indistinguishable. For each local policy \(i\), the policy input contains both a pooled embedding of the full set of objective vectors and the objective-specific vector \(z_i\) for that policy's assigned target. Thus two policy copies share network weights, but are conditioned on different \(z_i\)'s and receive different local rewards. In Cat Feeder, \(z_i\) encodes the assigned feeding request, including its position and remaining lifetime; in the Atari environments, \(z_i\) encodes the assigned enemy object exposed by the object-centric observation.

%% file: hyperparameters.tex
\section{Reproducibility Details}
\label{sec:reprod-details}

The code for all experiments is included in the supplementary materials.

All experiments use 10 random seeds: \{1825, 410, 4507, 4013, 3658, 5215, 6861, 6803, 7819, 8057\}.

\paragraph{Compute resources.}
All experiments were conducted on a single Ubuntu 22.04.5 LTS machine running Linux kernel 5.15.0-161-generic. The machine has two Intel Xeon Gold 6248 CPUs at 2.50 GHz, providing 40 physical CPU cores and 80 hardware threads, 692 GiB of system memory, and four NVIDIA GeForce RTX 2080 Ti GPUs detected on the PCI bus. The NVIDIA driver and kernel module version was 555.42.02.

\paragraph{Existing assets and licenses.}
Our experiments use the Arcade Learning Environment (ALE)~\cite{bellemare13arcade}, OCAtari~\cite{delfosse2023ocatari}, CleanRL~\cite{huang2022cleanrl}, Optuna~\cite{akiba2019optuna}, and the Deep W-Learning implementation associated with~\cite{rosero2024multi}. We use these assets only for research experiments and cite their original sources. The corresponding software licenses and terms of use are those distributed with the respective public repositories; we include the exact versions, repository URLs, and license names in the released supplementary code.

\subsection{Cat Feeder Environment}

\begin{table}[h]
\caption{Cat Feeder shared (training) environment parameters.}
\label{tab:cat-feeder-env-params}
\centering
\begin{tabular}{ll}
\toprule
Parameter & Value \\
\midrule
Grid size & $30 \times 30$ \\
Number of agents / targets & 8 \\
Target reward & 50.0 \\
Target expiry steps & 200 \\
Target expiry penalty & 50.0 \\
Max episode steps & 2000 \\
Moving targets & True \\
Direction change probability & 0.1 \\
Target move interval & 5 \\
\bottomrule
\end{tabular}
\end{table}

\begin{table}[h]
\caption{Cat Feeder multi-agent PPO hyperparameters (\allpay and \poorman).}
\label{tab:cat-feeder-mappo-hparams}
\centering
\begin{tabular}{ll}
\toprule
Parameter & Value \\
\midrule
\multicolumn{2}{l}{\textit{Auction}} \\
Bid upper bound & 6 \\
Bid penalty & 0.1 \\
Action window & 5 \\
Distance reward scale & 0.6 \\
\midrule
\multicolumn{2}{l}{\textit{Training}} \\
Iterations & 400 \\
Parallel environments & 4096 \\
Steps per rollout & 256 \\
Minibatches & 256 \\
PPO epochs & 4 \\
Learning rate & $2.5 \times 10^{-4}$ (annealed) \\
Discount $\gamma$ & 0.99 \\
GAE $\lambda$ & 0.95 \\
Clip coefficient & 0.05 \\
Entropy coefficient & 0.03 \\
Value function coefficient & 1.0 \\
Max gradient norm & 0.5 \\
\midrule
\multicolumn{2}{l}{\textit{Network}} \\
Actor hidden layers & $[128, 128, 128, 128]$ \\
Critic hidden layers & $[256, 256, 256, 256]$ \\
Target attention pooling & True \\
Target embedding dim & 64 \\
Target encoder hidden layers & $[64, 64]$ \\
\bottomrule
\end{tabular}
\end{table}

\begin{table}[h]
\caption{Cat Feeder single-agent PPO baseline hyperparameters. Hyperparameters were tuned using Optuna with 40 trials. The nearest-target shaping variant is identical but uses a distance reward scale of 0.6 with shaping toward the nearest target.}
\label{tab:cat-feeder-ppo-hparams}
\centering
\begin{tabular}{ll}
\toprule
Parameter & Value \\
\midrule
\multicolumn{2}{l}{\textit{Training}} \\
Iterations & 400 \\
Parallel environments & 4096 \\
Steps per rollout & 256 \\
Minibatches & 512 \\
PPO epochs & 8 \\
Learning rate & $1.74 \times 10^{-4}$ (annealed) \\
Discount $\gamma$ & 0.963 \\
GAE $\lambda$ & 0.970 \\
Clip coefficient & 0.327 \\
Entropy coefficient & $1.03 \times 10^{-4}$ \\
Value function coefficient & 1.076 \\
Max gradient norm & 0.840 \\
Distance reward scale & 0.0 \\
Target attention pooling & False \\
\midrule
\multicolumn{2}{l}{\textit{Network (shared with multi-agent PPO)}} \\
Actor hidden layers & $[128, 128, 128, 128]$ \\
Critic hidden layers & $[256, 256, 256, 256]$ \\
\bottomrule
\end{tabular}
\end{table}

\begin{table}[h]
\caption{Cat Feeder DWN baseline hyperparameters.}
\label{tab:cat-feeder-dwn-hparams}
\centering
\begin{tabular}{ll}
\toprule
Parameter & Value \\
\midrule
\multicolumn{2}{l}{\textit{Training}} \\
Total timesteps & $5 \times 10^{8}$ \\
Parallel environments & 256 \\
Discount $\gamma$ & 0.99 \\
Replay buffer size & $1 \times 10^{6}$ \\
Batch size & 256 \\
Learning starts & $1 \times 10^{5}$ \\
Train frequency & 256 \\
W-network train delay & $1 \times 10^{6}$ \\
Target network update frequency & 1000 \\
Target network $\tau$ & 1.0 \\
\midrule
\multicolumn{2}{l}{\textit{Networks}} \\
Q-network hidden layers & $[256, 256, 256, 256]$ \\
W-network hidden layers & $[128, 128, 128]$ \\
Q-network learning rate & $1 \times 10^{-4}$ \\
W-network learning rate & $1 \times 10^{-4}$ \\
\midrule
\multicolumn{2}{l}{\textit{Epsilon schedules}} \\
Q $\varepsilon$ start / min / decay & 0.99 / 0.01 / 0.99 \\
W $\varepsilon$ start / min / decay & 0.99 / 0.01 / 0.99 \\
\bottomrule
\end{tabular}
\end{table}

\subsection{Assault Environment}

\begin{table}[h]
\caption{Assault shared environment parameters.}
\label{tab:assault-env-params}
\centering
\begin{tabular}{ll}
\toprule
Parameter & Value \\
\midrule
Number of agents & 3 \\
Max enemies & 3 \\
Max episode steps & 10000 \\
Enemy destroy reward & 10.0 \\
Life loss penalty & 10.0 \\
Fire-while-hot penalty & 8.0 \\
Action window & 15 \\
Bid upper bound & 2 \\
Bid penalty & 0.3 \\
\bottomrule
\end{tabular}
\end{table}

\begin{table}[h]
\caption{Assault multi-agent PPO hyperparameters (\allpay and \poorman).}
\label{tab:assault-mappo-hparams}
\centering
\begin{tabular}{ll}
\toprule
Parameter & Value \\
\midrule
\multicolumn{2}{l}{\textit{Training}} \\
Iterations & 150 \\
Parallel environments & 128 \\
Steps per rollout & 512 \\
Minibatches & 8 \\
PPO epochs & 8 \\
Learning rate & $1 \times 10^{-4}$ (annealed) \\
Discount $\gamma$ & 0.99 \\
GAE $\lambda$ & 0.95 \\
Clip coefficient & 0.05 \\
Entropy coefficient & 0.05 \\
Value function coefficient & 0.5 \\
Max gradient norm & 0.5 \\
Target KL & 0.02 \\
\midrule
\multicolumn{2}{l}{\textit{Network}} \\
Actor hidden layers & $[128, 128, 128, 128]$ \\
Critic hidden layers & $[256, 256, 256, 256]$ \\
\bottomrule
\end{tabular}
\end{table}

\begin{table}[h]
\caption{Assault single-agent PPO baseline hyperparameters. Entries list differences from the multi-agent PPO setting; all omitted parameters are shared. Target KL early stopping is disabled for this baseline.}
\label{tab:assault-ppo-hparams}
\centering
\begin{tabular}{ll}
\toprule
Parameter & Value \\
\midrule
Learning rate & $2.5 \times 10^{-4}$ (annealed) \\
Clip coefficient & 0.1 \\
Entropy coefficient & 0.01 \\
Target KL & --- \\
\bottomrule
\end{tabular}
\end{table}

\begin{table}[h]
\caption{Assault DWN baseline hyperparameters.}
\label{tab:assault-dwn-hparams}
\centering
\begin{tabular}{ll}
\toprule
Parameter & Value \\
\midrule
\multicolumn{2}{l}{\textit{Training}} \\
Total timesteps & $1 \times 10^{7}$ \\
Parallel environments & 8 \\
Discount $\gamma$ & 0.99 \\
Replay buffer size & $5 \times 10^{5}$ \\
Batch size & 256 \\
Learning starts & $1 \times 10^{4}$ \\
Train frequency & 80 \\
W-network train delay & $1 \times 10^{5}$ \\
Target network update frequency & 1000 \\
Target network $\tau$ & 1.0 \\
\midrule
\multicolumn{2}{l}{\textit{Networks}} \\
Q-network hidden layers & $[256, 256, 256]$ \\
W-network hidden layers & $[128, 128, 128]$ \\
Q-network learning rate & $1 \times 10^{-4}$ \\
W-network learning rate & $1 \times 10^{-4}$ \\
\midrule
\multicolumn{2}{l}{\textit{Epsilon schedules}} \\
Q $\varepsilon$ start / min / decay & 0.99 / 0.01 / 0.995 \\
W $\varepsilon$ start / min / decay & 0.99 / 0.01 / 0.995 \\
\bottomrule
\end{tabular}
\end{table}